\def\eqref#1{equation~\ref{#1}}
\def\1{\bm{1}}
\DeclareMathAlphabet{\mathsfit}{\encodingdefault}{\sfdefault}{m}{sl}
\SetMathAlphabet{\mathsfit}{bold}{\encodingdefault}{\sfdefault}{bx}{n}
\title{Hyperbolic Discounting and Learning over Multiple Horizons}
\author{William Fedus\\
Google Brain \\
University of Montreal (Mila)\\
\texttt{liamfedus@google.com}
\And
Carles Gelada \\
Google Brain\\
\texttt{cgel@google.com}\\
\And 
Yoshua Bengio \\
University of Montreal (Mila) \\ 
CIFAR Senior Fellow\\
\texttt{yoshua.bengio@mila.quebec}\\
\AND
Marc G. Bellemare\\
Google Brain\\
\texttt{bellemare@google.com }\\
\And
Hugo Larochelle \\
Google Brain \\
\texttt{hugolarochelle@google.com}\\
}
\theoremstyle{definition}
\newtheorem{theorem}{Theorem}[section]
\newtheorem{lemma}[theorem]{Lemma}
\newtheorem{definition}{Definition}[section]
\begin{document}
\maketitle

\begin{abstract}
Reinforcement learning (RL) typically defines a discount factor ($\gamma$) as part of the Markov Decision Process.
The discount factor values future rewards by an exponential scheme that leads to theoretical convergence guarantees of the Bellman equation.
However, evidence from psychology, economics and neuroscience suggests that humans and animals instead have \emph{hyperbolic} time-preferences ($\frac{1}{1 + kt}$ for $k>0$).  
In this work we revisit the fundamentals of discounting in RL and bridge this disconnect by implementing an RL agent that acts via hyperbolic discounting.
We demonstrate that a simple approach approximates hyperbolic discount functions while still using familiar temporal-difference learning techniques in RL.  
Additionally, and independent of hyperbolic discounting, we make a surprising discovery that simultaneously learning value functions over multiple time-horizons is an effective auxiliary task which often improves over a strong value-based RL agent, Rainbow.
\end{abstract}

\section{Introduction}

The standard treatment of the reinforcement learning (RL) problem is the Markov Decision Process (MDP) which includes a discount factor $0 \leq \gamma  < 1$ that exponentially reduces the present value of future rewards \citep{bellman1957markovian, sutton2018reinforcement}. 
A reward $r_t$ received in $t$-time steps is devalued to $\gamma^{t}r_t$, a discounted utility model introduced by \citet*{samuelson1937note}.
This establishes a time-preference for rewards realized sooner rather than later.  
The decision to exponentially discount future rewards by $\gamma$ leads to value functions that satisfy theoretical convergence properties \citep{bertsekas1995neuro}.
The magnitude of $\gamma$ also plays a role in stabilizing learning dynamics of RL algorithms \citep{prokhorov1997adaptive, bertsekas1996neuro} and has recently been treated as a hyperparameter of the optimization \citep{OpenAI_dota, xu2018meta}.

However, both the magnitude and the functional form of this discounting function implicitly establish priors over the solutions learned.
The magnitude of $\gamma$ chosen establishes an \emph{effective horizon} for the agent, far beyond which rewards are neglected \citep{kearns2002near}.  
This effectively imposes a time-scale of the environment, which may not be accurate.
However, less well-known and expanded on later, the exponential discounting of future rewards is consistent with a prior belief that there exists a \emph{known} constant risk to the agent in the environment (\citet*{sozou1998hyperbolic}, Section \ref{sec: single_hazard}). 
This is a strong assumption that may not be supported in richer environments.

Additionally, discounting future values exponentially and according to a single discount factor $\gamma$ does not harmonize with the measured value preferences in humans and animals \citep{mazur1985probability, mazur1997choice, ainslie1992picoeconomics, green2004discounting, maia2009reinforcement}. 
A wealth of empirical evidence has been amassed that humans, monkeys, rats and pigeons instead discount future returns \emph{hyperbolically}, where $d_k(t) = \frac{1}{1+kt}$, for some positive $k > 0$ \citep{ainslie1975specious, ainslie1992picoeconomics, mazur1985probability, mazur1997choice, frederick2002time, green1981preference, green2004discounting}.

\begin{figure}[h!]
    \centering
    \includegraphics[width=0.7\columnwidth]{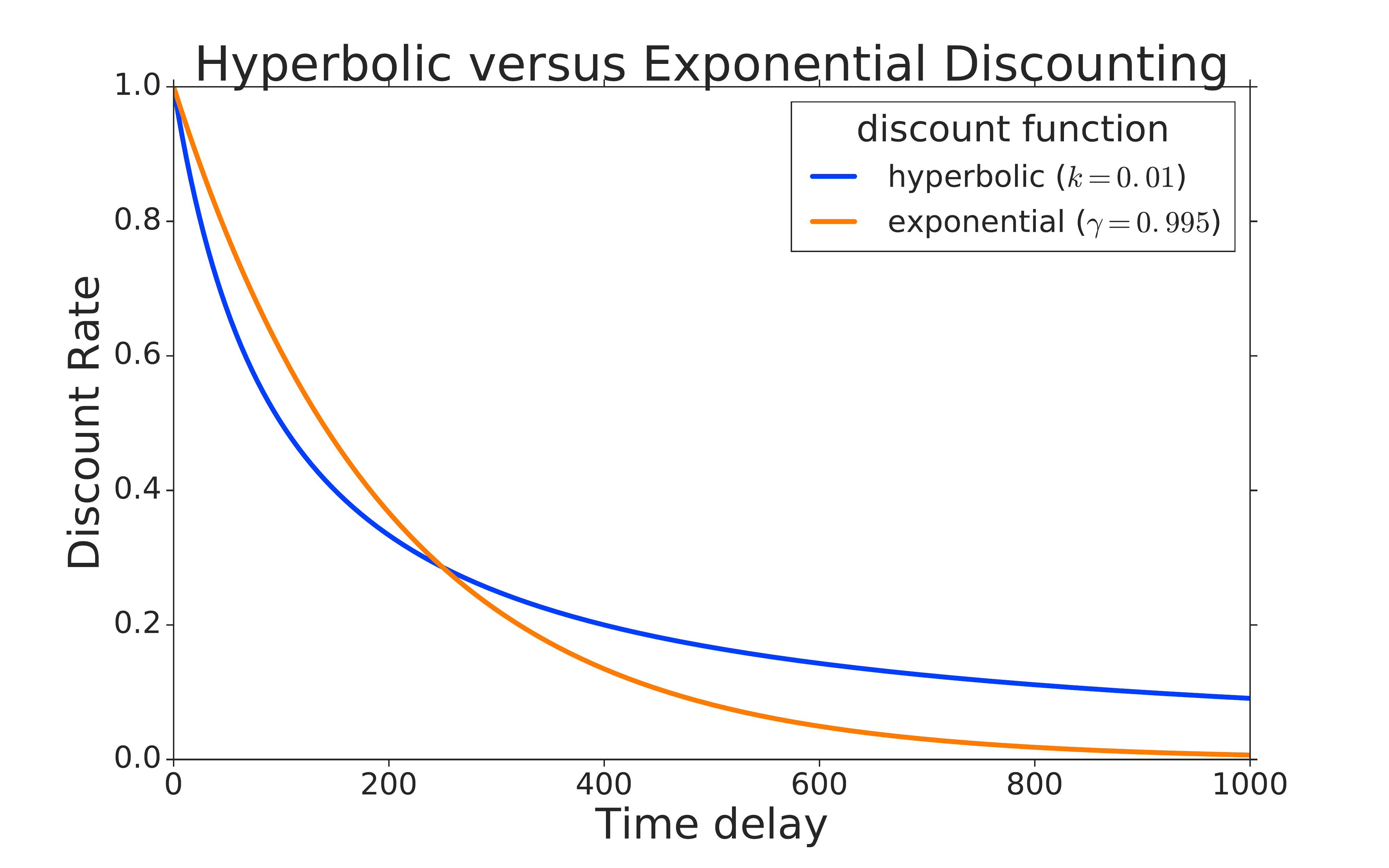}
    \caption{Hyperbolic versus exponential discounting.  Humans and animals often exhibit hyperbolic discounts (blue curve) which have shallower discount declines for large horizons. In contrast, RL agents often optimize exponential discounts (orange curve) which drop at a constant rate regardless of how distant the return.}
\end{figure}

As an example of hyperbolic time-preferences, consider the hypothetical:  a stranger approaches with a simple proposition.  
He offers you \$1M immediately with no risk, but if you can wait until tomorrow, he promises you \$1.1M dollars.  
With no further information many are skeptical of this would-be benefactor and choose to receive \$1M immediately.
Most rightly believe the future promise holds risk.
However, in an alternative proposition, he instead promises you \$1M in 365 days or \$1.1M in 366 days. 
Under these new terms many will instead choose the \$1.1M offer.
Effectively, the discount rate has \emph{decreased} further out, indicating the belief that it is less likely for the promise to be reneged on the 366th day if it were not already broken on the 365th day.
Note that discount rates in humans have been demonstrated to vary with the size of the reward so this time-reversal might not emerge for \$1 versus \$1.1 \citep{myerson1995discounting, green1997rate}.

Hyperbolic discounting is consistent with these reversals in time-preferences \citep{green1994temporal}.
Exponential discounting, on the other hand, always remains consistent between these choices and was shown in \citet*{strotz1955myopia} to be the only time-consistent sliding discount function. 
This discrepancy between the time-preferences of animals from the exponential discounted measure of value might be presumed irrational. 
However, \citet*{sozou1998hyperbolic} demonstrates that this behavior is mathematically consistent with the agent maintaining some uncertainty over the \emph{hazard rate} in the environment.
In this formulation, rewards are discounted based on the possibility the agent will succumb to a risk and will thus not survive to collect them.
Hazard rate, defined in Section \ref{sec: hazard_implies_discount}, measures the per-time-step risk the agent incurs as it acts in the environment.

\textbf{Hazard and its associated discount function.}
Common RL environments are also characterized by risk, but in a narrower sense.
In deterministic environments like the original Arcade Learning Environment (ALE) \citep{bellemare2013arcade} stochasticity is often introduced through techniques like no-ops \citep{mnih2015human} and sticky actions \citep{machado2018revisiting} where the action execution is noisy.
Physics simulators may have noise and the randomness of the policy itself induces risk.
But even with these stochastic injections the risk to reward emerges in a more restricted sense.
Episode-to-episode risk may vary as the value function and resulting policy evolve.
States once safely navigable may become dangerous through catastrophic forgetting \citep{mccloskey1989catastrophic, french1999catastrophic} or through exploration the agent may venture to new dangerous areas of the state space.
However, this is still a narrow manifestation of risk as the environment is generally stable and repetitive.
In Section \ref{sec: hazard_mdps} we show that a prior distribution reflecting the uncertainty over the hazard rate, has an associated discount function in the sense that an MDP with either this hazard distribution or the discount function, has the same value function for all policies. 
This equivalence implies that learning policies with a discount function can be interpreted as making them robust to the associated hazard distribution. 
Thus, discounting serves as a tool to ensure that policies deployed in the real world perform well even under risks they were not trained under. 

\textbf{Hyperbolic discounting from TD-learning algorithms.}
We propose an algorithm that approximates hyperbolic discounting while building on successful Q-learning \citep{watkins1992q} tools and their associated theoretical guarantees.
We show learning many Q-values, each discounting exponentially with a different discount factor $\gamma$, can be aggregated to approximate hyperbolic (and other non-exponential) discount factors.
We demonstrate the efficacy of our approximation scheme in our proposed Pathworld environment which is characterized both by an uncertain per-time-step risk to the agent.  
The agent must choose which risky path to follow but it stands to gain a higher reward the longer, riskier paths.
A conceptually similar situation might arise for a foraging agent balancing easily realizable, small meals versus more distant, fruitful meals.  The setup is described in further detail in Section \ref{sec: hyperbolic_results}.
We then consider higher-dimensional RL agents in the ALE, where we measure the benefits of our technique.
Our approximation mirrors the work of \cite{kurth2009temporal, redish2010neural} which empirically demonstrates that modeling a finite set of $\mu$Agents simultaneously can approximate hyperbolic discounting function which is consistent with fMRI studies \citep{tanaka2004prediction, schweighofer2008low}.
Our method extends to other non-hyperbolic discount functions and uses deep neural networks to model the different Q-values from a shared representation.


Surprisingly and in addition to enabling new discounting schemes, we observe that learning a set of Q-values is beneficial as an auxiliary task \citep{jaderberg2016reinforcement}.  
Adding this \emph{multi-horizon auxiliary task} often improves over strong baselines including C51 \citep{bellemare2017distributional} and Rainbow \citep{hessel2018rainbow} in the ALE \citep{bellemare2013arcade}.


The paper is organized as follows.
Section \ref{sec: hazard_implies_discount} recounts how a prior belief of the risk in the environment can imply a specific discount function.
Section \ref{sec: hazard_mdps} formalizes hazard in MDPs.
In Section \ref{sec: est_discount_through_gamma} we demonstrate that hyperbolic (and other) discounting rates can be computed by Q-learning \citep{watkins1992q} over \emph{multiple horizons}, that is, multiple discount functions $\gamma$.
We then provide a \emph{practical} approach to approximating these alternative discount schemes in Section \ref{sec: approx_hyperbolic_qvals}.
We demonstrate the efficacy of our approximation scheme in the Pathworld environment in Section \ref{sec: hyperbolic_results} and then go on to consider the high-dimensional ALE setting in Sections \ref{sec: hyperbolic_results}, \ref{sec: multi_horizon}.
We conclude with ablation studies, discussion and commentary on future research directions.

This work questions the RL paradigm of learning policies through a single discount function which exponentially discounts future rewards through two contributions:

\begin{enumerate}
    \item \textbf{Hyperbolic (and other non-exponential)-agent.}  A practical approach for training an agent which discounts future rewards by a hyperbolic (or other non-exponential) discount function and acts according to this.
    \item \textbf{Multi-horizon auxiliary task.}  A demonstration of multi-horizon learning over many $\gamma$ simultaneously as an effective auxiliary task.
\end{enumerate}

\section{Related Work}
\textbf{Hyperbolic discounting in economics.}
Hyperbolic discounting is well-studied in the field of economics \citep{sozou1998hyperbolic, dasgupta2005uncertainty}.
\citet*{dasgupta2005uncertainty} proposes a softer interpretation than \citet*{sozou1998hyperbolic} (which produces a per-time-step of death via the hazard rate) and demonstrates that uncertainty over the \emph{timing} of rewards can also give rise to hyperbolic discounting and preference reversals, a hallmark of hyperbolic discounting.
However, though alternative motivations for hyperbolic discounting exist we build upon \citet*{sozou1998hyperbolic} for its clarity and simplicity.

Hyperbolic discounting was initially presumed to not lend itself to TD-based solutions \citep{daw2000behavioral} but the field has evolved on this point.  \citet*{maia2009reinforcement} proposes solution directions that find models that discount quasi-hyperbolically even though each learns with exponential discounting \citep{loewenstein1996out} but reaffirms the difficulty.  Finally, \citet*{alexander2010hyperbolically} proposes hyperbolically discounted temporal difference (HDTD) learning by making connections to hazard.  However, this approach introduces two additional free parameters to adjust for differences in reward-level. 

\textbf{Behavior RL and hyperbolic discounting in neuroscience.}
TD-learning has long been used for modeling behavioral reinforcement learning \citep{montague1996framework, schultz1997neural, sutton2018reinforcement}.
TD-learning computes the error as the difference between the expected value and actual value \citep{sutton2018reinforcement, daw2003reinforcement} where the error signal emerges from unexpected rewards.
However, these computations traditionally rely on exponential discounting as part of the estimate of the value which disagrees with empirical evidence in humans and animals \citep{strotz1955myopia, mazur1985probability, mazur1997choice, ainslie1975specious, ainslie1992picoeconomics}.
Hyperbolic discounting has been proposed as an alternative to exponential discounting though it has been debated as an accurate model \citep{kacelnik1997normative, frederick2002time}.
Naive modifications to TD-learning to discount hyperbolically present issues since the simple forms are inconsistent \citep{daw2000behavioral, redish2010neural} RL models have been proposed to explain behavioral effects of humans and animals \citep{fu2006recurrent, rangel2008framework} but \cite{kurth2009temporal} demonstrated that distributed exponential discount factors can directly model hyperbolic discounting.  
This work proposes the $\mu$Agent, an agent that models the value function with a specific discount factor $\gamma$.  
When the distributed set of $\mu$Agent's votes on the action, this was shown to approximate hyperbolic discounting well in the adjusting-delay assay experiments \citep{mazur1987adjusting}.
Using the hazard formulation established in \cite{sozou1998hyperbolic}, we demonstrate how to extend this to other non-hyperbolic discount functions and demonstrate the efficacy of using a deep neural network to model the different Q-values from a shared representation.

\textbf{Towards more flexible discounting in reinforcement learning.}
RL researchers have recently adopted more flexible versions beyond a fixed discount factor \citep{feinberg1994markov, sutton1995td, sutton2011horde, white2017unifying}.
Optimal policies are studied in \cite{feinberg1994markov} where two value functions with different discount factors are used.
Introducing the discount factor as an argument to be queried for a set of timescales is considered in both Horde \citep{sutton2011horde} and $\gamma$-nets \citep{sherstan2018generalizing}.
\cite{reinke2017average} proposes the Average Reward Independent Gamma Ensemble framework which imitates the average return estimator.

\citet*{lattimore2011time} generalizes the original discounting model through discount functions that vary with the age of the agent, expressing time-inconsistent preferences as in hyperbolic discounting.
The need to increase training stability via effective horizon was addressed in \citet*{franccois2015discount} who proposed dynamic strategies for the discount factor $\gamma$.
Meta-learning approaches to deal with the discount factor have been proposed in \citet*{xu2018meta}.
Finally, \citet*{pitis2019rethinking} characterizes rational decision making in sequential processes, formalizing a process that admits a state-action dependent discount rates.
This body of work suggests growing tension between the original MDP formulation with a fixed $\gamma$ and future research directions.

Operating over multiple time scales has a long history in RL.
\citet*{sutton1995td} generalizes the work of \citet*{singh1992scaling} and \citet*{dayan1993feudal} to formalize a multi-time scale TD learning model theory.
Previous work has been explored on solving MDPs with multiple reward functions and multiple discount factors though these relied on separate transition models \citep{feinberg1999constrained, dolgov2005stationary}.
\citet*{edwardsexpressing} considers decomposing a reward function into separate components each with its own discount factor. 
In our work, we continue to model the same rewards, but now model the value over different horizons.
Recent work in difficult exploration games demonstrates the efficacy of two different discount factors \citep{burda2018exploration} one for intrinsic rewards and one for extrinsic rewards.
Finally, and concurrent with this work, \cite{romoff2019separating} proposes the TD$(\Delta)$-algorithm which breaks a value function into a series of value functions with smaller discount factors.

\textbf{Auxiliary tasks in reinforcement learning.}
Finally, auxiliary tasks have been successfully employed and found to be of considerable benefit in RL. \citet*{suddarth1990rule} used auxiliary tasks to facilitate representation learning.
Building upon this, work in RL has consistently demonstrated benefits of auxiliary tasks to augment the low-information coming from the environment through extrinsic rewards
\citep{lample2017playing, mirowski2016learning}, \citep{jaderberg2016reinforcement, veeriah2018many, sutton2011horde}


\section{Belief of Risk Implies a Discount Function}\label{sec: hazard_implies_discount}
\citet*{sozou1998hyperbolic} formalizes time preferences in which future rewards are discounted based on the probability that the agent will not \emph{survive} to collect them due to an encountered risk or \emph{hazard}. 

\begin{definition}
    \emph{Survival $s(t)$ is the probability of the agent surviving until time $t$.}
    \begin{equation}
        s(t) = P(\text{agent is alive} | \text{at time}\; t)
    \end{equation}
    
    \label{def: survival}
\end{definition}

A future reward $r_t$ is less valuable presently if the agent is unlikely to survive to collect it. 
If the agent is risk-neutral, the present value of a future reward $r_t$ received at time-$t$ should be discounted by the probability that the agent will survive until time $t$ to collect it, $s(t)$.\footnote{Note the difference in RL where future rewards are discounted by \emph{time-delay} so the value is $v(r_t) = \gamma^{t}r_t$.}
\begin{equation}
    v(r_t) = s(t)r_t
    \label{eqn: value_discount}
\end{equation}

Consequently, if the agent is certain to survive, $s(t)=1$, then the reward is not discounted per Equation \ref{eqn: value_discount}.  From this it is then convenient to define the hazard rate.

\begin{definition}
    \emph{Hazard rate $h(t)$ is the negative rate of change of the log-survival at time $t$}
    
    \begin{equation}
        h(t) = -\frac{d}{dt}  \text{ln}s(t) 
    \end{equation}
    \label{def: hazard}
\end{definition}

or equivalently expressed as $h(t) = -\frac{ds(t)}{dt} \frac{1}{s(t)}$.  
Therefore the environment is considered hazardous at time $t$ if the log survival is decreasing sharply.

\citet*{sozou1998hyperbolic} demonstrates that the prior belief of the risk in the environment implies a specific discounting function.
When the risk occurs at a known constant rate than the agent should discount future rewards exponentially.
However, when the agent holds \emph{uncertainty} over the hazard rate then hyperbolic and alternative discounting rates arise.

\subsection{Known Hazard Implies Exponential Discount}\label{sec: single_hazard}
We recover the familiar exponential discount function in RL based on a prior assumption that the environment has a \emph{known constant} hazard.
Consider a known hazard rate of $h(t) = \lambda\ \geq 0$.
Definition \ref{def: hazard} sets a first order differential equation $\lambda =-\frac{d}{dt}  \text{ln}s(t) = -\frac{ds(t)}{dt} \frac{1}{s(t)}$.    
The solution for the survival rate is $s(t) = e^{-\lambda t}$ which can be related to the RL discount factor $\gamma$
\begin{equation}
    s(t) = e^{-\lambda t} = \gamma^t
\end{equation}
This interprets $\gamma$ as the per-time-step probability of the episode continuing.
This also allows us to connect the hazard rate $\lambda \in [0, \infty]$ to the discount factor $\gamma \in [0, 1)$.
\begin{equation}
\gamma = e^{-\lambda}
\label{eqn: gamma_to_lambda}
\end{equation}
As the hazard increases $\lambda \rightarrow \infty$, then the corresponding discount factor becomes increasingly myopic $\gamma \rightarrow 0$.  Conversely, as the environment hazard vanishes, $\lambda \rightarrow 0$, the corresponding agent becomes increasingly far-sighted $\gamma \rightarrow 1$.

In RL we commonly choose a single $\gamma$ which is consistent with the prior belief that there exists a known constant hazard rate $\lambda=-\text{ln}(\gamma)$.
We now relax the assumption that the agent holds this strong prior that it \emph{exactly} knows the true hazard rate.
From a Bayesian perspective, a looser prior allows for some uncertainty in the underlying hazard rate of the environment which we will see in the following section.

\subsection{Uncertain Hazard Implies Non-Exponential Discount}
We may not always be so confident of the true risk in the environment and instead reflect this underlying uncertainty in the hazard rate through a hazard prior $p(\lambda)$.  
Our survival rate is then computed by weighting specific exponential survival rates defined by a given $\lambda$ over our prior $p(\lambda)$
\begin{equation}
    s(t) = \int_{\lambda=0}^\infty  p(\lambda) e^{-\lambda t} d \lambda
    \label{eqn: survival_laplacian}
\end{equation}
\citet*{sozou1998hyperbolic} shows that under an exponential prior of hazard $p(\lambda) = \frac{1}{k} \text{exp}(-\lambda / k)$ the expected survival rate for the agent is \emph{hyperbolic}
\begin{equation}
    s(t) = \frac{1}{1 + k t} \equiv \Gamma_k(t)
    \label{eqn: hyp_discount}
\end{equation}
We denote the hyperbolic discount by $\Gamma_k(t)$ to make the connection to $\gamma$ in reinforcement learning explicit.
Further, \citet*{sozou1998hyperbolic} shows that different priors over hazard correspond to different discount functions.
We reproduce two figures in Figure \ref{fig: prior_to_discount} showing the correspondence between different hazard rate priors and the resultant discount functions.
The common approach in RL is to maintain a delta-hazard (black line) which leads to exponential discounting of future rewards.
Different priors lead to non-exponential discount functions.

\begin{figure}[h]
    \centering
    \includegraphics[width=0.9\columnwidth]{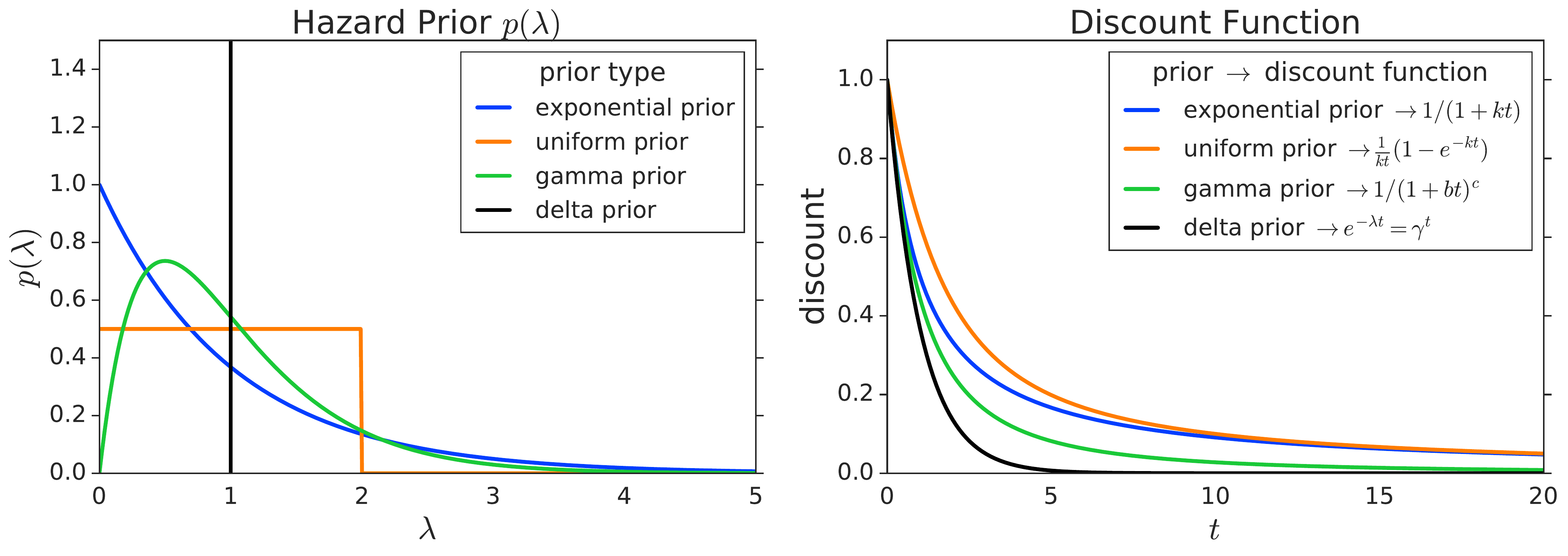}
    \caption{We reproduce two figures from \citet*{sozou1998hyperbolic}. There is a correspondence between hazard rate priors and the resulting discount function.  In RL, we typically discount future rewards exponentially which is consistent with a Dirac delta prior (black line) on the hazard rate indicating \emph{no uncertainty} of hazard rate.  However, this is a special case and priors with uncertainty over the hazard rate imply new discount functions. All priors have the same mean hazard rate $\mathbb{E}[p(\lambda)] = 1$.}
    \label{fig: prior_to_discount}
\end{figure}

\section{Hazard in MDPs}\label{sec: hazard_mdps}
To study MDPs with \emph{hazard distributions} and \emph{general discount functions} we introduce two modifications.
The hazardous MDP now is defined by the tuple $<\mathcal{S},\mathcal{A}, R, P, \mathcal{H}, d>$.
In standard form, the state space $\mathcal{S}$ and the action space $\mathcal{A}$ may be discrete or continuous.  
The learner observes samples from the environment transition probability $P(s_{t+1}|s_t, a_t)$ for going from $s_{t}\in \mathcal{S}$ to $s_{t+1}\in \mathcal{S}$ given $a_t \in \mathcal{A}$. We will consider the case where $P$ is a sub-stochastic transition function, which defines an episodic MDP.
The environment emits a bounded reward $r: \mathcal{S} \times \mathcal{A} \rightarrow \left[r_{min},  r_{max}\right]$ on each transition.
In this work we consider non-infinite episodic MDPs.

The first difference is that at the beginning of each episode, a hazard $\lambda \in [0, \infty)$ is sampled from the hazard distribution $\mathcal{H}$. 
This is equivalent to sampling a \emph{continuing} probability $\gamma =  e^{-\lambda}$. 
During the episode, the hazard modified transition function will be $P_\lambda$, in that $P_\lambda(s' | s, a) = e^{-\lambda} P(s' | s, a)$. 
The second difference is that we now consider a general discount function $d(t)$.  This differs from the standard approach of exponential discounting in RL with $\gamma$ according to $d(t)=\gamma^t$, which is a special case.

This setting makes a close connection to partially observable Markov Decision Process (POMDP) \citep{kaelbling1998planning} where one might consider $\lambda$ as an unobserved variable. 
However, the classic POMDP definition contains an explicit discount function $\gamma$ as part of it's definition which does not appear here.

A policy $\pi: \mathcal{S} \rightarrow \mathcal{A}$ is a mapping from states to actions. 
The state action value function $Q_\pi^{\mathcal{H}, d}(s,a)$ is the expected discounted rewards after taking action $a$ in state $s$ and then following policy $\pi$ until termination.
 \begin{equation}
     Q^{\mathcal{H}, d}_{\pi}(s, a) = \mathbb{E}_{\lambda}  \mathbb{E}_{ \pi, P_\lambda} \left[ \sum_{t=0}^\infty d(t) R(s_t, a_t) | s_0 = s, a_0 = a \right]
 \end{equation}
where $\lambda \sim \mathcal{H}$ and $\mathbb{E}_{ \pi, P_\lambda}$ implies that $s_{t+1} \sim  P_\lambda(\cdot | s_t, a_t)$ and $a_t \sim \pi(\cdot | s_t)$.

\subsection{Equivalence Between Hazard and Discounting}
In the hazardous MDP setting we observe the same connections between hazard and discount functions delineated in Section \ref{sec: hazard_implies_discount}. 
This expresses an equivalence between the value function of an MDP with a discount and MDP with a hazard distribution. 

For example, there exists an equivalence between the exponential discount function $d(t) = \gamma^t$ to the \emph{undiscounted} case where the agent is subject to a  $(1-\gamma)$ per time-step of dying \citep{lattimore2011time}.
The typical Q-value (left side of Equation \ref{eqn: discount_hazard_equivalence}) is when the agent acts in an environment without hazard $\lambda = 0$ or $\mathcal{H} = \delta(0)$ and discounts future rewards according to $d(t) = \gamma^t = e^{-\lambda t}$ which we denote as $Q_\pi^{\delta(0), \gamma^t} (s,a)$.  
The alternative Q-value (right side of Equation \ref{eqn: discount_hazard_equivalence}) is when the agent acts under hazard rate $\lambda = -\ln \gamma$ but does not discount future rewards which we denote as $Q_\pi^{\delta(-\ln \gamma), 1} (s,a)$.
\begin{equation}\label{eqn: discount_hazard_equivalence}
Q_\pi^{\delta(0), \gamma^t} (s,a) = Q_\pi^{\delta(-\ln \gamma), 1} (s,a)\; \forall\; \pi, s, a.
\end{equation}
where $\delta(x)$ denotes the Dirac delta distribution at $x$. 
This follows from $P_\lambda(s' | s, a) = e^{-\lambda} P(s' | s, a)$
\begin{align*}
    \mathbb{E}_{ \pi, P} \left[ \sum_{t=0}^\infty \gamma^t R(s_t, a_t) |  s_0 = s, a_0 = a \right] &= \mathbb{E}_{ \pi, P} \left[ \sum_{t=0}^\infty e^{-\lambda t} R(s_t, a_t) | s_0 = s, a_0 = a \right] \\
    &= \mathbb{E}_{ \pi, P_\lambda} \left[ \sum_{t=0}^\infty R(s_t, a_t) | s_0 = s, a_0 = a \right] \\
\end{align*}
Following Section \ref{sec: hazard_implies_discount} we also show a similar equivalence between hyperbolic discounting and the specific hazard distribution
$p_k(\lambda) = \frac{1}{k} \text{exp} (-\lambda / k)$, where again, $\lambda \in [0, \infty)$ in Appendix \ref{appendix:  hyperbolic_equiv_hazard}.

\begin{equation*}
Q_\pi^{\delta(0), \Gamma_k} (s,a) = Q_\pi^{p_k, 1} (s,a) 
\end{equation*}

For notational brevity later in the paper, we will omit the explicit hazard distribution $\mathcal{H}$-superscript if the environment is not hazardous.

\section{Computing Hyperbolic Q-Values From Exponential Q-Values}\label{sec: est_discount_through_gamma}
We show how one can re-purpose exponentially-discounted Q-values to compute hyperbolic (and other-non-exponential) discounted Q-values.
The central challenge with using non-exponential discount strategies is that most RL algorithms use some form of TD learning \citep{sutton1988learning}. 
This family of algorithms exploits the Bellman equation \citep{bellman1958routing} which, when using exponential discounting, relates the value function at one state with the value at the following state.
\begin{equation}
Q_\pi^{\gamma^t}(s,a) = \mathbb{E}_{\pi, P} [R(s,a) + \gamma Q_\pi(s', a')]
\end{equation}
where expectation $\mathbb{E}_{\pi, P}$ denotes sampling $a \sim \pi(\cdot| s)$, $s' \sim P(\cdot | s, a)$, and $a' \sim \pi(\cdot | s')$.

Being able to reuse the literature on TD methods without being constrained to exponential discounting is thus an important challenge. 
\subsection{Computing Hyperbolic $Q$-Values}
Let's start with the case where we would like to estimate the value function where rewards are discounted hyperbolically instead of the common exponential scheme.
We refer to the hyperbolic Q-values as $Q^{\Gamma}_{\pi}$ below in Equation \ref{eqn: non_exponential_discount}
\begin{align}
    Q^{\Gamma_k}_{\pi}(s, a) =& \mathbb{E}_{\pi} \left[\Gamma_k(1)R(s_1, a_1) + \Gamma_k(2)R(s_2, a_2) + \cdots \biggr| s, a \right]\\
                    =& \mathbb{E}_{\pi} \left[ \sum_t \Gamma_k(t)R(s_t, a_t)\biggr| s, a \right]
    \label{eqn: non_exponential_discount}
\end{align}

We may relate the hyperbolic $Q^{\Gamma}_{\pi}$-value to the values learned through standard $Q$-learning.  
To do so, notice that the hyperbolic discount $\Gamma_t$ can be expressed as the integral of a certain function $f(\gamma, t)$ for $\gamma=[0, 1)$ in Equation \ref{eqn: hyperbolic_original}.
\begin{equation}
    \int_{\gamma=0}^1 \gamma^{kt} d\gamma = \frac{1}{1 + kt} = \Gamma_k(t) 
    \label{eqn: hyperbolic_original}
\end{equation}

The integral over this specific function $f(\gamma, t) = \gamma^{kt}$ yields the desired hyperbolic discount factor $\Gamma_k(t)$ by considering an \emph{infinite set} of exponential discount factors $\gamma$ over its domain $\gamma \in [0, 1)$.
We visualize the hyperbolic discount factors $\frac{1}{1+t}$ (consider $k=1$) for the first few time-steps $t$ in Figure \ref{fig: exact_integral}.

\begin{figure*}[h]
    \centering
    \includegraphics[width=\textwidth]{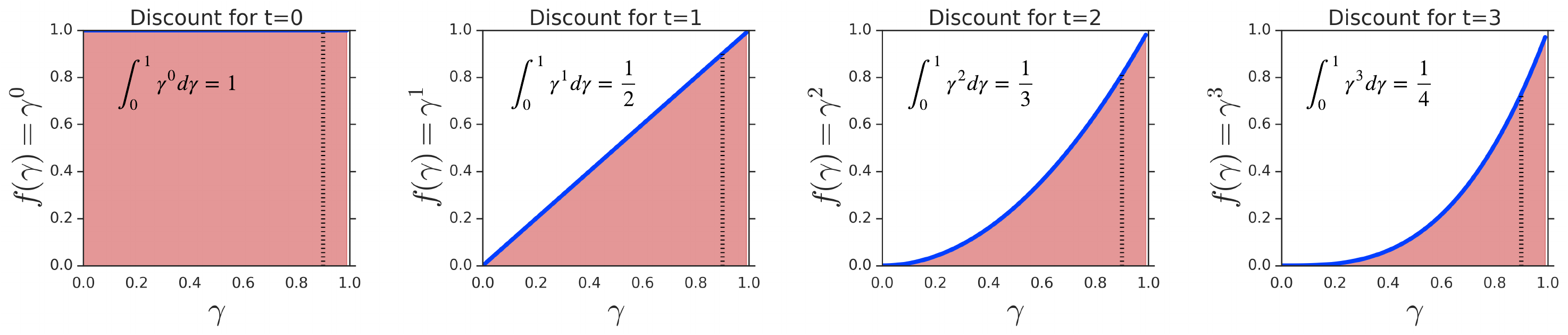}
    \caption{From left to right we consider the first four time-steps ($t=0, 1, 2, 3$) of the function $\gamma^t$ (shown in blue) over the valid range. The integral (red) of $\gamma^t$ at time $t$ equals the \emph{hyperbolic} discount function $1/(1 + t)$ shown in each subplot.  Time $t=0$ is not discounted since the integral of $\gamma^0=1$ from 0 to 1 is 1.  Then $t=1$ is discounted by $\frac{1}{2}$, $t=2$ is discounted by $t=\frac{1}{3}$ and so on.  For illustration, the black dotted vertical line indicates the discount that we would use for each time-step if we considered only a single discount factor $\gamma=0.9$.}
    \label{fig: exact_integral}
\end{figure*}

Recognize that the integrand $\gamma^{kt}$ is the standard exponential discount factor which suggests a connection to standard Q-learning \citep{watkins1992q}.
This suggests that if we could consider an infinite set of $\gamma$ then we can combine them to yield hyperbolic discounts for the corresponding time-step $t$.
We build on this idea of modeling many $\gamma$ throughout this work.

We employ Equation \ref{eqn: hyperbolic_original} and return to the task of computing hyperbolic Q-values $Q^{\Gamma}_{\pi}(s, a)$\footnote{Hyperbolic Q-values can generally be infinite for bounded rewards. We consider non-infinite episodic MDPs only.}
\begin{align}
    Q^{\Gamma}_{\pi}(s, a) 
                    =& \mathbb{E}_{\pi} \left[ \sum_t \Gamma_k(t)R(s_t, a_t)\biggr| s, a \right] \\
                    =& \mathbb{E}_{\pi} \left[ \sum_t \left( \int_{\gamma=0}^1  \gamma^{kt} d\gamma \right) R(s_t, a_t)  \biggr| s, a \right] \\
                    =& \int_{\gamma=0}^1 \mathbb{E}_{\pi} \left[ \sum_t R(s_t, a_t) (\gamma^k)^t  \biggr| s, a \right] d \gamma \\
                    =& \int_{\gamma=0}^1 Q^{(\gamma^k)^t}_{\pi}(s,a) d\gamma
    \label{eqn: hyperbolic_q_values}
\end{align}
where $\Gamma_k(t)$ has been replaced on the first line by $\left( \int_{\gamma=0}^1  \gamma^{kt} d\gamma\right)$ and the exchange is valid if $\sum_{t=0}^\infty \gamma^{kt} r_t < \infty$.
This shows us that we can compute the $Q^{\Gamma}_{\pi}$-value according to hyperbolic discount factor by considering an infinite set of $Q^{\gamma^k}_{\pi}$-values computed through standard $Q$-learning.
Examining further, each $\gamma \in [0, 1)$ results in TD-errors learned for a new $\gamma^k$.
For values of $k<1$, which extends the horizon of the hyperbolic discounting, this would result in larger $\gamma$.

\subsection{Generalizing to Other Non-Exponential $Q$-Values}
Equation \ref{eqn: hyperbolic_original} computes hyperbolic discount functions but its origin was not mathematically motivated.
We consider here an alternative scheme to deduce ways to model hyperbolic as well as different discount schemes through integrals of $\gamma$.

\begin{lemma}\label{lem: exponential_discount_weighting}
\emph{Let $Q_\pi^{\mathcal{H}, \gamma}(s,a)$ be the state action value function under exponential discounting in a hazardous MDP $<\mathcal{S}, \mathcal{A}, R, P, \mathcal{H}, \gamma^t>$ and let $Q_\pi^{\mathcal{H}, d}(s,a)$ refer to the value function in the same MDP except for new discounting $<\mathcal{S}, \mathcal{A}, R, P, \mathcal{H}, d>$. 
If there exists a function $w: [0, 1] \to \mathbb{R}$ such that
\begin{equation}
d(t) = \int_0^1 w(\gamma) \gamma^t d\gamma
\end{equation}
which we will refer to as the exponential weighting condition, then
\begin{equation}
Q_\pi^{\mathcal{H}, d}(s,a) = \int_0^1 w(\gamma) Q_\pi^{\mathcal{H}, \gamma}(s,a) d\gamma
\end{equation}
}
\end{lemma}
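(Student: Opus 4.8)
The plan is to prove the identity by unfolding the definition of the value function, replacing $d(t)$ using the exponential weighting condition, and then pushing the $\gamma$-integral outside the expectations and the time-sum, at which point the inner object is exactly the exponentially-discounted value function $Q_\pi^{\mathcal{H},\gamma}$. This is purely a linearity-plus-Fubini argument: nothing about the Bellman equation or the structure of the hazard distribution is needed beyond the fact that the weight $w(\gamma)\gamma^t$ carries no dependence on the sampled $\lambda$ or on the sampled trajectory, so it can be freely moved across $\mathbb{E}_\lambda$, $\mathbb{E}_{\pi,P_\lambda}$ and $\sum_t$.

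\textbf{Main computation.} First I would write out, starting from the definition of the state-action value function in the hazardous MDP with $\lambda \sim \mathcal{H}$,
\begin{align*}
Q_\pi^{\mathcal{H}, d}(s,a)
&= \mathbb{E}_{\lambda}\, \mathbb{E}_{\pi, P_\lambda}\!\left[\sum_{t=0}^\infty d(t)\, R(s_t, a_t) \,\middle|\, s_0=s, a_0=a\right]\\
&= \mathbb{E}_{\lambda}\, \mathbb{E}_{\pi, P_\lambda}\!\left[\sum_{t=0}^\infty \left(\int_0^1 w(\gamma)\gamma^t\, d\gamma\right) R(s_t, a_t) \,\middle|\, s_0=s, a_0=a\right]\\
&= \int_0^1 w(\gamma)\, \mathbb{E}_{\lambda}\, \mathbb{E}_{\pi, P_\lambda}\!\left[\sum_{t=0}^\infty \gamma^t R(s_t, a_t) \,\middle|\, s_0=s, a_0=a\right] d\gamma\\
&= \int_0^1 w(\gamma)\, Q_\pi^{\mathcal{H}, \gamma}(s,a)\, d\gamma,
\end{align*}
where the second line substitutes the exponential weighting condition $d(t)=\int_0^1 w(\gamma)\gamma^t\,d\gamma$, the third line interchanges $\int_0^1 w(\gamma)(\cdot)\,d\gamma$ with $\mathbb{E}_\lambda$, $\mathbb{E}_{\pi,P_\lambda}$ and $\sum_t$, and the last line recognizes the bracketed quantity as the exponentially-discounted value function with discount factor $\gamma$ in the \emph{same} hazardous MDP.

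\textbf{Main obstacle.} The only real content is justifying the interchange on the third line, since $w$ is allowed to be signed ($w:[0,1]\to\mathbb{R}$), so a bare Tonelli argument for nonnegative integrands does not apply directly; I would instead invoke Fubini's theorem over the product of the hazard measure $\mathcal{H}$, the trajectory law induced by $\pi$ and $P_\lambda$, the counting measure on $t$, and Lebesgue measure on $[0,1]$, after first verifying absolute integrability. This is where the standing assumptions enter: rewards are bounded by $\max(|r_{min}|,|r_{max}|)$ and the MDP is episodic (the sub-stochastic $P_\lambda$ makes trajectories terminate), so the inner expected absolute return $\mathbb{E}_\lambda \mathbb{E}_{\pi,P_\lambda}\big[\sum_t \gamma^t |R(s_t,a_t)|\big]$ is bounded uniformly in $\gamma \in [0,1]$ by the finite undiscounted expected absolute return, reducing absolute integrability to the condition $w \in L^1([0,1])$. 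This is the same finiteness caveat flagged after Equation \ref{eqn: hyperbolic_q_values}. Once absolute integrability is in hand the interchange is automatic and the remaining steps follow immediately from linearity of expectation.
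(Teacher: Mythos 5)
Your proposal is correct and follows essentially the same route as the paper's proof: substitute the exponential weighting condition into the definition of $Q_\pi^{\mathcal{H},d}$, interchange the $\gamma$-integral with $\mathbb{E}_\lambda$, $\mathbb{E}_{\pi,P_\lambda}$ and $\sum_t$, and recognize the inner expression as $Q_\pi^{\mathcal{H},\gamma}$. Your Fubini/absolute-integrability justification of the interchange is actually more careful than the paper's, which only remarks that the exchange is valid when $\sum_{t=0}^\infty \gamma^t R(s_t,a_t) < \infty$.
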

\begin{proof}
Applying the condition on $d$,
    \begin{align}
    Q_\pi^{\mathcal{H}, d}(s,a)  &= \mathbb{E}_{\lambda}  \mathbb{E}_{ \pi, P_\lambda} \left[ \sum_{t=0}^\infty \left( \int_0^1 w(\gamma) \gamma^t  d\gamma \right) R(s_t, a_t) | s_0 = s, a_0 = a \right] \\
    &= \int_0^1 \mathbb{E}_{\lambda}  \mathbb{E}_{ \pi, P_\lambda} w(\gamma) \left[ \sum_{t=0}^\infty  \gamma^t R(s_t, a_t) | s_0 = s, a_0 = a \right] d\gamma \\
    &= \int_0^1 w(\gamma) Q_\pi^{\mathcal{H}, \gamma}(s,a) d\gamma              
    \end{align}
\end{proof}
where again the exchange is valid if $\sum_{t=0}^\infty \gamma^t R(s_t, a_t) < \infty$. 
We can now see that the exponential weighting condition is satisfied for hyperbolic discounting and a list of other discounting that we might want to consider.

For instance, the hyperbolic discount can also be expressed as the integral of a different function $f(\gamma, t)$ for $\gamma=[0, 1)$ in Equation \ref{eqn: hyperbolic}.
\begin{equation}
    \frac{1}{k} \int_{\gamma=0}^1  \gamma ^{1/k + t-1}  d\gamma = \frac{1}{1 + kt}
    \label{eqn: hyperbolic}
\end{equation}
As before, an integral over a function $f'(\gamma, t) = \frac{1}{k}\gamma^{1/k + t - 1} = w(\gamma)\gamma^t$ yields the desired hyperbolic discount factor $\Gamma_k(t)$. 
This integral can be derived by recognizing Equation \ref{eqn: survival_laplacian} as the Laplace transform of the prior $\mathcal{H}=p(\lambda)$ and then applying a change of variables $\gamma=e^{-\lambda}$.  Computing hyperbolic and other discount functions is demonstrated in detail in Appendix \ref{appendix: alternative_discount}.
We summarize in Table \ref{tbl: alternative_discounts} how a particular hazard prior $p(\lambda)$ can be computed via integrating over specific weightings $w(\gamma)$ and the corresponding discount function.

\begin{table*}[h!]
    \centering
    \bgroup
    \def\arraystretch{2}
        \begin{tabular}{c | c c c}
        & $\mathcal{H}=p(\lambda)$ &  $d(t)$ & $w(\gamma)$\\
        \hline \hline
        Dirac Delta Prior & $\delta(\lambda - k)$ & $e^{-kt} (= (\gamma_k)^t)$ & $\frac{1}{\gamma}\delta(-\ln \gamma - k)$  \\
        Exponential Prior & $\frac{1}{k}e^{-\lambda / k}$ & $\frac{1}{1 + kt}$ & $\frac{1}{k} \gamma ^{1/k -1}$ \\
        Uniform Prior & $  
            \begin{cases}
              \frac{1}{k}, & \text{if}\ \lambda \in [0, k] \\
              0, & \text{otherwise}
            \end{cases}$
            & $\frac{1}{kt} \left(1 - e^{-kt} \right)$ 
            & $  
            \begin{cases}
                \frac{1}{k}\gamma^{-1}, & \text{if}\ \gamma \in [e^{-k}, 1] \\
                0, & \text{otherwise}
            \end{cases}$\\
        \end{tabular}
    \egroup
    \caption{Different hazard priors $\mathcal{H} = p(\lambda)$ can be alternatively expressed through weighting exponential discount functions $\gamma^t$ by $w(\gamma)$. This table matches different hazard distributions to their associated discounting function and the weighting function per Lemma \ref{lem: exponential_discount_weighting}. The typical case in RL is a Dirac Delta Prior over hazard rate $\delta(\lambda - k)$. We only show this in detail for completeness; one would not follow such a convoluted path to arrive back at an exponential discount but this approach holds for richer priors.
    The derivations can be found in the Appendix \ref{appendix: alternative_discount}.}
    \label{tbl: alternative_discounts}
\end{table*}


\section{Approximating Hyperbolic $Q$-Values}\label{sec: approx_hyperbolic_qvals}
Section \ref{sec: est_discount_through_gamma} describes an equivalence between hyperbolically-discounted Q-values and integrals of exponentially-discounted Q-values requiring evaluating an infinite set of value functions. 
We now present a \emph{practical} approach to approximate discounting $\Gamma(t) = \frac{1}{1+kt}$ using standard $Q$-learning \citep{watkins1992q}.

\subsection{Approximating the Discount Factor Integral}\label{sec: approx_integral}
To avoid estimating an infinite number of $Q^{\gamma}_{\pi}$-values we introduce a free hyperparameter ($n_{\gamma}$) which is the total number of $Q^{\gamma}_{\pi}$-values to consider, each with their own $\gamma$. 
We use a practically-minded approach to choose $\mathcal{G}$ that emphasizes evaluating larger values of $\gamma$ rather than uniformly choosing points and empirically performs well as seen in Section \ref{sec: hyperbolic_results}.
\begin{equation}
    \mathcal{G}= [\gamma_0, \gamma_1, \cdots, \gamma_{n_\gamma}]
\end{equation}
Our approach is described in Appendix \ref{appendix: gamma_interval}.
Each $Q_{\pi}^{\gamma_i}$ computes the discounted sum of returns according to that specific discount factor $Q^{\gamma_i}_{\pi}(s, a) =  \mathbb{E}_{\pi} \left[ \sum_t (\gamma_i)^t r_t | s_0 = s, a_0 = a \right]$.

We previously proposed two equivalent approaches for computing hyperbolic Q-values, but for simplicity we consider the one presented in Lemma \ref{lem: exponential_discount_weighting}.
The set of $Q$-values permits us to estimate the integral through a Riemann sum (Equation \ref{eqn: riemann_sum}) which is described in further detail in Appendix \ref{appendix:  hyperbolic_approach}.

\begin{align}\label{eqn: riemann_sum}
    Q_{\pi}^{\Gamma}(s,a) =& \int_0^1 w(\gamma) Q_\pi^{\gamma}(s,a) d\gamma \\
                          \approx& \sum_{\gamma_i \in \mathcal{G}} (\gamma_{i+1} - \gamma_i)\;w(\gamma_i)\;Q_\pi^{\gamma_i}(s,a)
\end{align}

where we estimate the integral through a lower bound.
We consolidate this entire process in Figure \ref{fig: model_description} where we show the full process of rewriting the hyperbolic discount rate, hyperbolically-discounted Q-value, the approximation and the instantiated agent.
This approach is similar to that of \cite{kurth2009temporal} where each $\mu$Agent models a specific discount factor $\gamma$.
However, this differs in that our final agent computes a weighted average over each Q-value rather than a sampling operation of each agent based on a $\gamma$-distribution.

\begin{figure}[!h]
    \centering
    \includegraphics[width=\columnwidth]{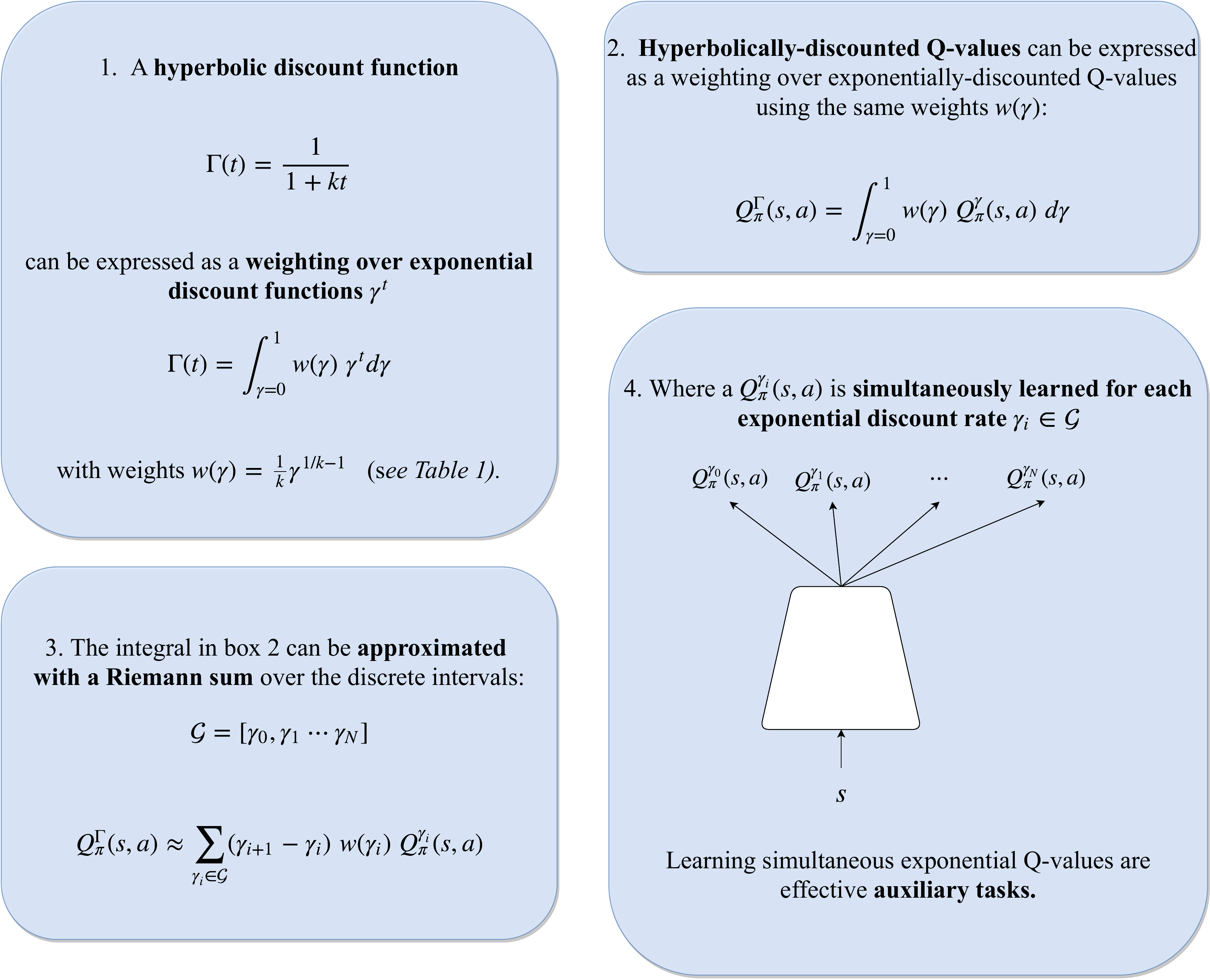}
    \caption{Summary of our approach to approximating hyperbolic (and other non-exponential) Q-values via a weighted sum of exponentially-discounted Q-vaulues.}
    \label{fig: model_description}
\end{figure}



\section{Pathworld Experiments}\label{sec: hyperbolic_results}

\subsection{When to Discount Hyperbolically?}
The benefits of hyperbolic discounting will be greatest under:

\begin{enumerate}
    \item \textbf{Uncertain hazard}. The hazard-rate characterizing the environment is not known. For instance, an unobserved hazard-rate variable $\lambda$ is drawn independently at the beginning of each episode from $\mathcal{H} = p(\lambda)$.  
    \item \textbf{Non-trivial intertemporal decisions}. The agent faces \emph{non-trivial} intertemporal decision.  A non-trivial decision is one between smaller nearby rewards versus larger distant rewards.\footnote{A trivial intertemporal decision is one between small distant rewards versus large close rewards}.
\end{enumerate}

In the absence of both properties we would not expect any advantage to discounting hyperbolically.  
As described before, if there is a single-true hazard rate $\lambda_{\text{env}}$, than an optimal $\gamma^* = e^{-\lambda_\text{env}}$ exists and future rewards should be discounted exponentially according to it.  
Further, without \emph{non-trivial intertemporal trade-offs} which would occur if there is one path through the environment with perfect alignment of short- and long-term objectives, all discounting schemes will yield the same optimal policy.

\subsection{Pathworld Details}
We note two sources for discounting rewards in the future:  \emph{time delay} and \emph{survival probability} (Section \ref{sec: hazard_mdps}).  
In Pathworld of \ref{fig: hyperbolic_gridworld}, we train to maximize hyperbolically discounted returns ($\sum_t \Gamma_k(t) R(s_t, a_t)$) under no hazard ($\mathcal{H} = \delta(\lambda - 0)$) but then evaluate the undiscounted returns $d(t)=1.0 \; \forall \; t$ with the paths subject to hazard $\mathcal{H} = \frac{1}{k}\text{exp}(-\lambda/k)$.
Through this procedure, we are able to train an agent that is \emph{robust} to hazards in the environment.

The agent makes one decision in Pathworld (Figure \ref{fig: hyperbolic_gridworld}): which of the $N$ paths to investigate.  
Once a path is chosen, the agent continues until it reaches the end or until it dies. This is similar to a multi-armed bandit, with each action subject to dynamic risk.
The paths vary quadratically in length with the index $d(i) = i^2$ but the rewards increase linearly with the path index $r(i) = i$.  
This presents a non-trivial decision for the agent.  
At deployment, an unobserved hazard $\lambda \sim \mathcal{H}$ is drawn and the agent is subject to a per-time-step risk of dying of $(1-e^{-\lambda})$.  
This environment differs from the adjusting-delay procedure presented by \cite{mazur1987adjusting} and then later modified by \cite{kurth2009temporal}.
Rather then determining time-preferences through varaible-timing of rewards, we determine time-preferences through risk to the reward.

\begin{figure}
    \centering
    \includegraphics[width=0.45\columnwidth]{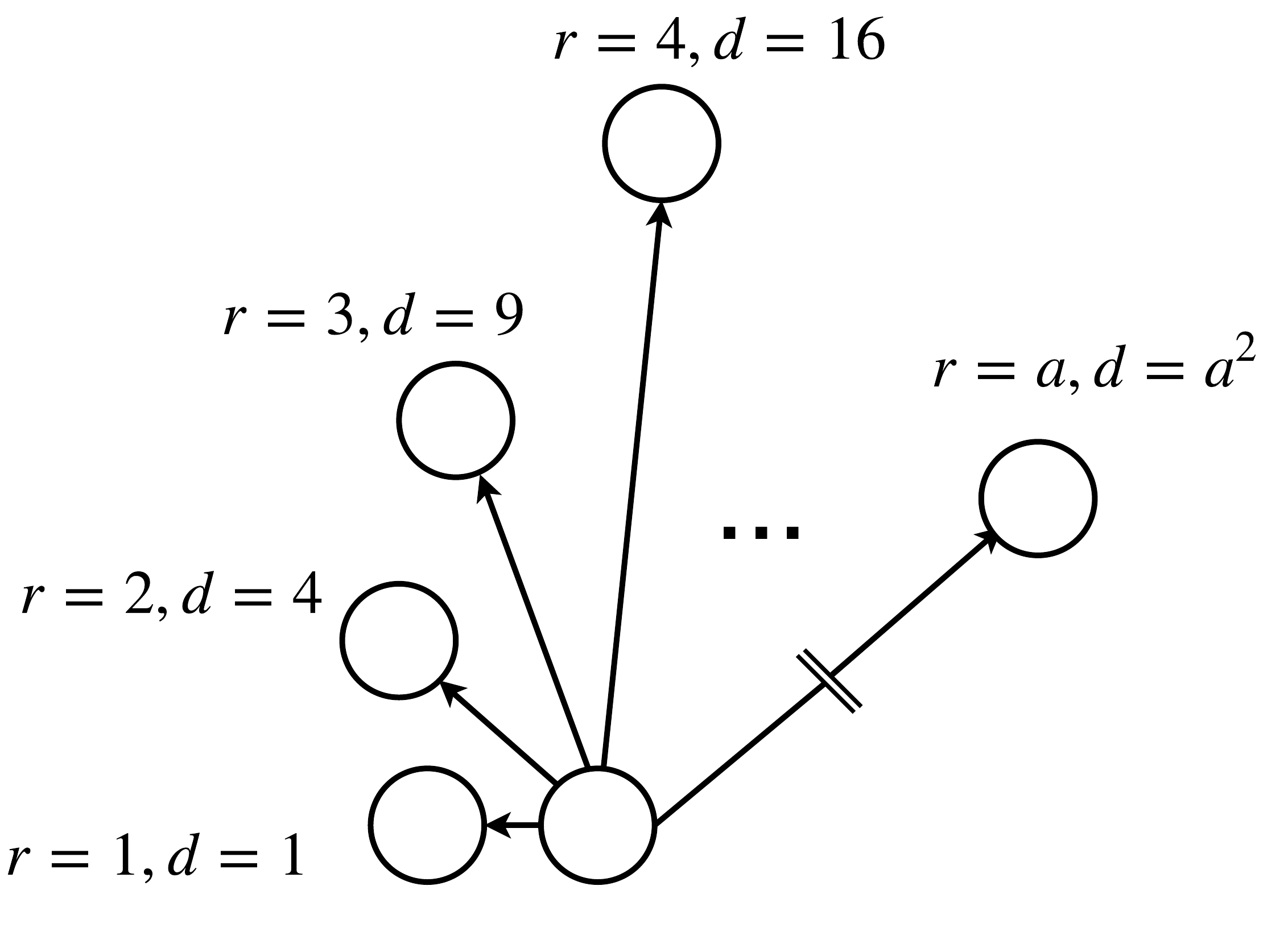}
    \caption{The Pathworld.  Each state (white circle) indicates the accompanying reward $r$ and the distance from the starting state $d$.  From the start state, the agent makes a single action: which which path to follow to the end.  Longer paths have a larger rewards at the end, but the agent incurs a higher risk on a longer path.}
    \label{fig: hyperbolic_gridworld}
\end{figure}

\subsection{Results in Pathworld}
Figure \ref{fig: path_returns} validates that our approach well-approximates the true hyperbolic value of each path when the hazard prior matches the true distribution.  
Agents that discount exponentially according to a single $\gamma$ (as is commonly the case in RL) incorrectly value the paths.

\begin{figure}[h]
    \begin{floatrow}
        \ffigbox[0.55\FBwidth]{%
            \includegraphics[width=\columnwidth]{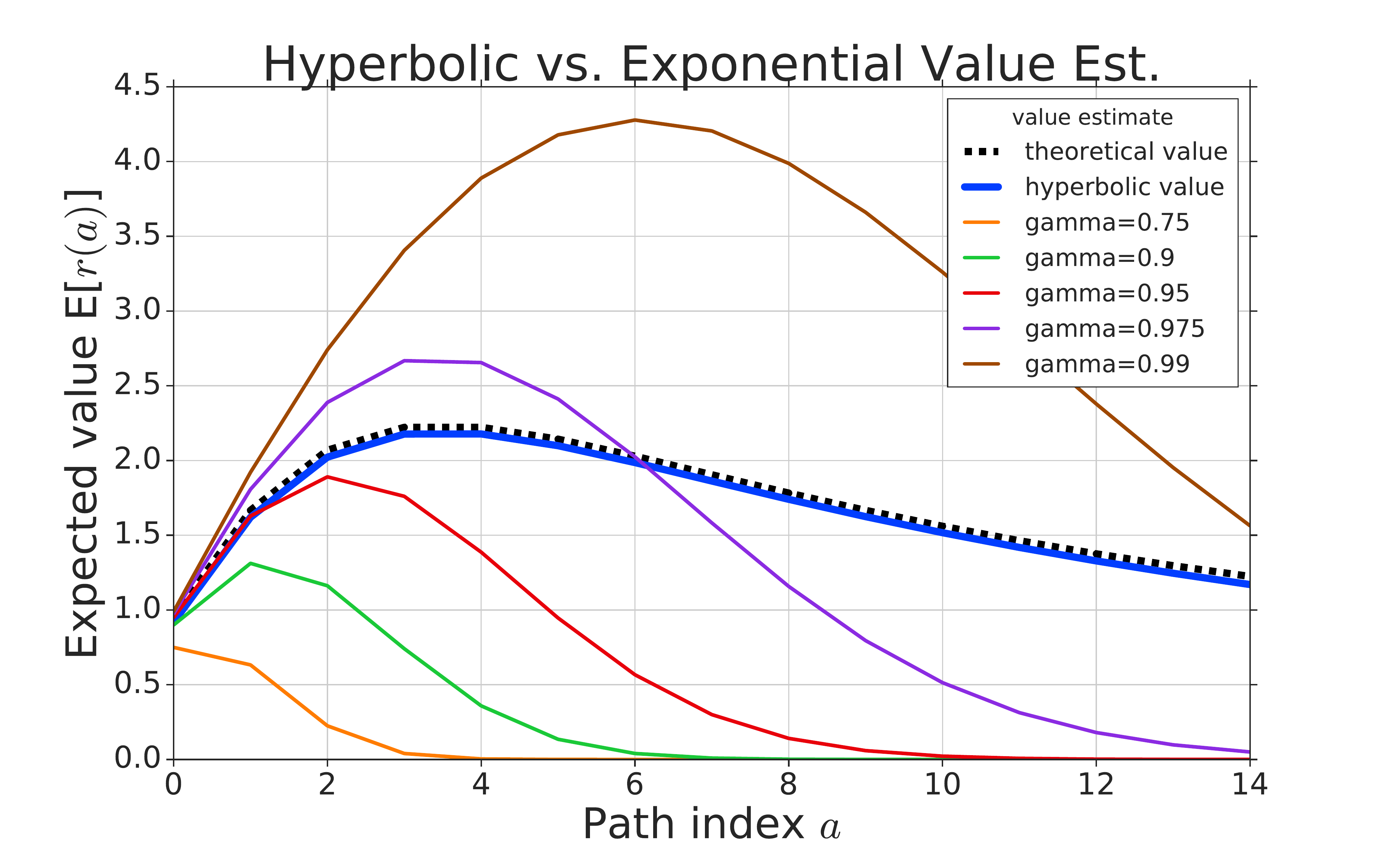}
            \caption{In each episode of Pathworld an unobserved hazard $\lambda \sim p(\lambda)$ is drawn and the agent is subject to a total risk of the reward not being realized of $(1-e^{-\lambda})^{d(a)}$ where $d(a)$ is the path length.  When the agent's hazard prior matches the true hazard distribution, the value estimate agrees well with the theoretical value.  Exponential discounts for many $\gamma$ fail to well-approximate the true value as seen to the right in Table \ref{tbl: path_returns_MSE}.}
            \label{fig: path_returns}
        }
        
        \capbtabbox{%
            \def\arraystretch{1.35}
                \begin{tabular}{ c c } 
                    \textbf{Discount function} & \textbf{MSE} \\ 
                    \hline 
                    \textbf{hyperbolic value} & \textbf{0.002} \\
                    $\gamma$=0.975& 0.566 \\
                    $\gamma$=0.95& 1.461 \\
                     $\gamma$=0.9& 2.253 \\
                    $\gamma$=0.99& 2.288 \\
                    $\gamma$=0.75& 2.809 \\
                    & \\
                    & \\
                \end{tabular}
            }{%
                \caption{The average mean squared error (MSE) over each of the paths in Figure \ref{fig: path_returns} showing that our approximation scheme well-approximates the true value-profile.}
                \label{tbl: path_returns_MSE}
            }
    \end{floatrow}
\end{figure}

We examine further the failure of exponential discounting in this hazardous setting.
For this environment, the true hazard parameter in the prior was $k=0.05$ (i.e. $\lambda \sim 20\text{exp}(-\lambda / 0.05)$).  
Therefore, at deployment, the agent must deal with dynamic levels of risk and faces a non-trivial decision of which path to follow.
Even if we tune an agent's $\gamma = 0.975$ such that it chooses the correct arg-max path, it still fails to capture the functional form (Figure \ref{fig: path_returns}) and it achieves a high error over all paths (Table \ref{tbl: path_returns_MSE}). 
If the arg-max action was not available or if the agent was proposed to evaluate non-trivial intertemporal decisions, it would act sub-optimally.

In the next two experiments we consider the more realistic case where the agent's prior over hazard \emph{does not} exactly match the environment true hazard rate.  
In Figure \ref{fig: diff_hyperbolic_coefficients} we consider the case that the agent still holds an exponential prior but has the wrong coefficient $k$ and in Figure \ref{fig: uniform_prior} we consider the case where the agent still holds an exponential prior but the true hazard is actually drawn from a uniform distribution with the same mean.

\begin{figure}[h]
    \begin{floatrow}
        \ffigbox[0.55\FBwidth]{%
            \includegraphics[width=\columnwidth]{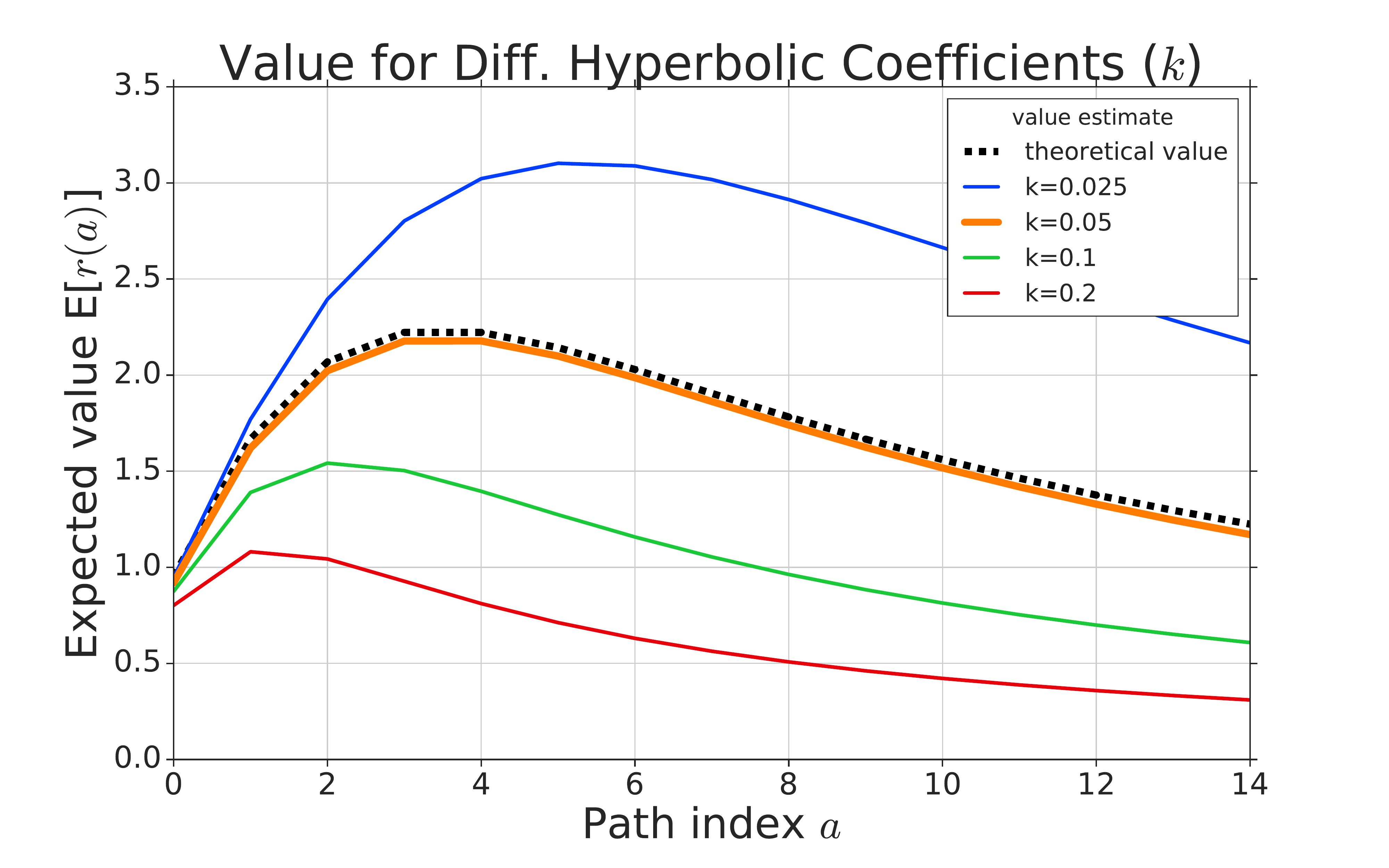}
            \caption{Case when the hazard coefficient $k$ \emph{does not} match that environment hazard.  Here the true hazard coefficient is $k=0.05$, but we compute values for hyperbolic agents with mismatched priors in range $k=[0.025, 0.05, 0.1, 0.2]$.  Predictably, the mismatched priors result in a higher prediction error of value but performs more reliably than exponential discounting, resulting in a cumulative lower error.  Numerical results in Table \ref{tbl: diff_hyperbolic_coefficients_MSE}.}
            \label{fig: diff_hyperbolic_coefficients}
        }
        
        \capbtabbox{%
            \def\arraystretch{1.35}
                \begin{tabular}{ c c } 
                    \textbf{Discount function} & \textbf{MSE} \\ 
                    \hline 
                    k=0.05& \textbf{0.002} \\
                    k=0.1& 0.493 \\
                    k=0.025& 0.814 \\
                    k=0.2& 1.281 \\
                    & \\
                    & \\
                    & \\
                \end{tabular}
            }{%
                \caption{The average mean squared error (MSE) over each of the paths in Figure \ref{fig: diff_hyperbolic_coefficients}. As the prior is further away from the true value of $k=0.05$, the error increases.  However, notice that the errors for large factor-of-2 changes in $k$ result in generally lower errors than if the agent had considered only a single exponential discount factor $\gamma$ as in Table \ref{tbl: path_returns_MSE}.}
                \label{tbl: diff_hyperbolic_coefficients_MSE}
            }
    \end{floatrow}
\end{figure}

\begin{figure}[h]
    \begin{floatrow}
        \ffigbox[0.55\FBwidth]{%
            \includegraphics[width=\columnwidth]{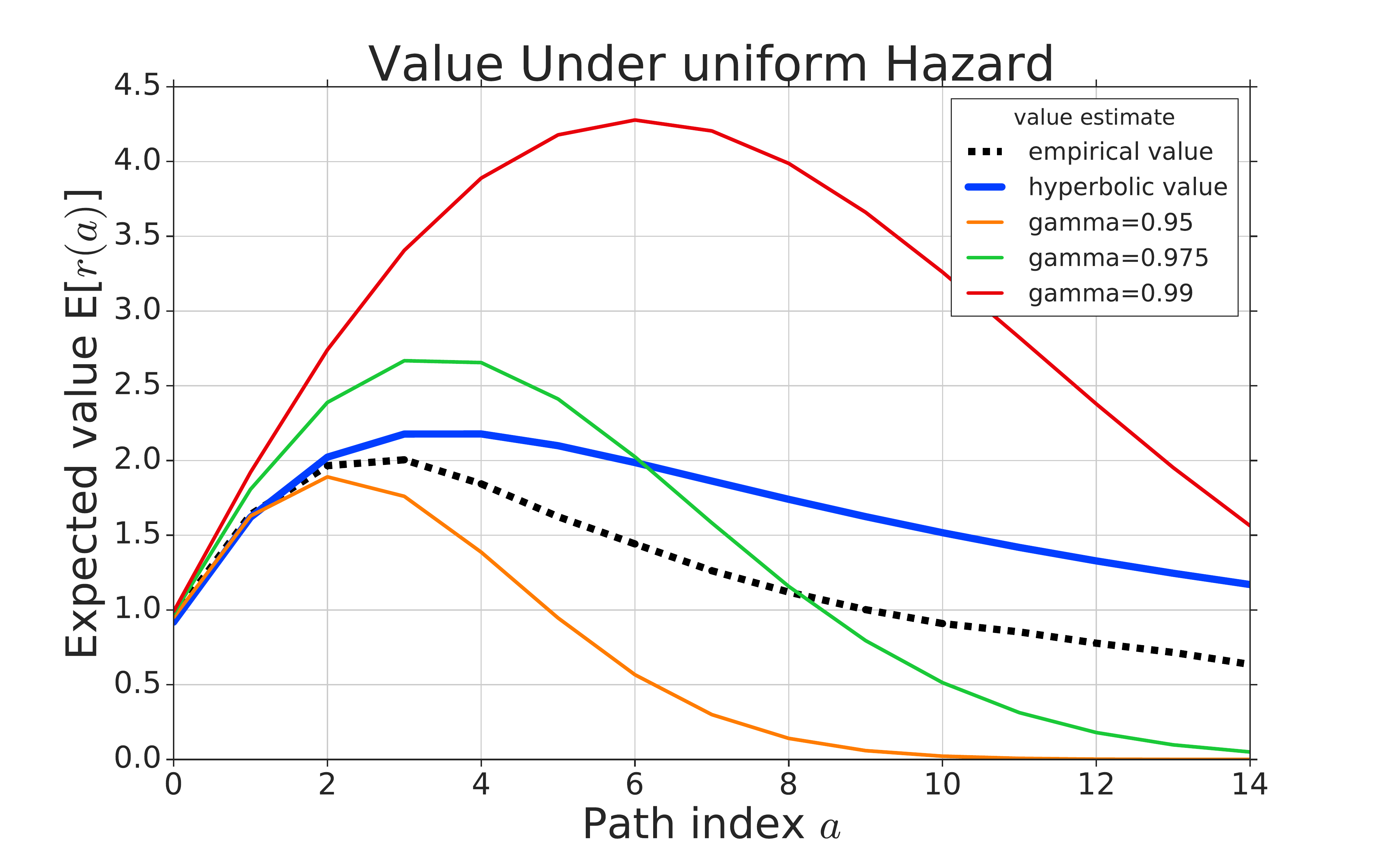}
            \caption{If the true hazard rate is now drawn according to a \emph{uniform} distribution (with the same mean as before) the original hyperbolic discount matches the functional form better than exponential discounting. Numerical results in Table \ref{tbl: uniform_MSE}.}
            \label{fig: uniform_prior}
        }
        
        \capbtabbox{%
            \def\arraystretch{1.35}
                \begin{tabular}{ c c } 
                    \textbf{Discount function} & \textbf{MSE} \\ 
                    \hline 
                    hyperbolic value& \textbf{0.235} \\
                    $\gamma=0.975$& 0.266 \\
                    $\gamma=0.95$& 0.470 \\
                    $\gamma=0.99$& 4.029 \\
                    & \\
                    & \\
                    & \\
                \end{tabular}
            }{%
                \caption{The average mean squared error (MSE) over each of the paths in Figure \ref{fig: uniform_prior} when the underlying hazard is drawn according to a \emph{uniform} distribution.  We find that hyperbolic discounting results is more robust to hazards drawn from a uniform distribution than exponential discounting.}
                \label{tbl: uniform_MSE}
            }
    \end{floatrow}
\end{figure}

Through these two validating experiments, we demonstrate the robustness of estimating hyperbolic discounted Q-values in the case when the environment presents dynamic levels of risk and the agent faces non-trivial decisions.
Hyperbolic discounting is preferable to exponential discounting even when the agent's prior does not precisely match the true environment hazard rate distribution, by coefficient (Figure \ref{fig: diff_hyperbolic_coefficients}) or by functional form (Figure \ref{fig: uniform_prior}).

\section{Atari 2600 Experiments}
With our approach validated in Pathworld, we now move to the high-dimensional environment of Atari 2600, specifically, ALE.
We use the Rainbow variant from Dopamine \citep{DBLP:journals/corr/abs-1812-06110} which implements three of the six considered improvements from the original paper:  distributional RL, predicting n-step returns and prioritized replay buffers. 

The agent (Figure \ref{fig: model_architecture}) maintains a shared representation $h(s)$ of state, but computes $Q$-value logits for each of the $N$ $\gamma_i$ via $Q_{\pi}^{(i)}(s,a) = f(W_i h(s) + b_i)$ where $f(\cdot)$ is a ReLU-nonlinearity \citep{nair2010rectified} and $W_i$ and $b_i$ are the learnable parameters of the affine transformation for that head.

\begin{figure}[!h]
    \centering
    \includegraphics[width=0.55\columnwidth]{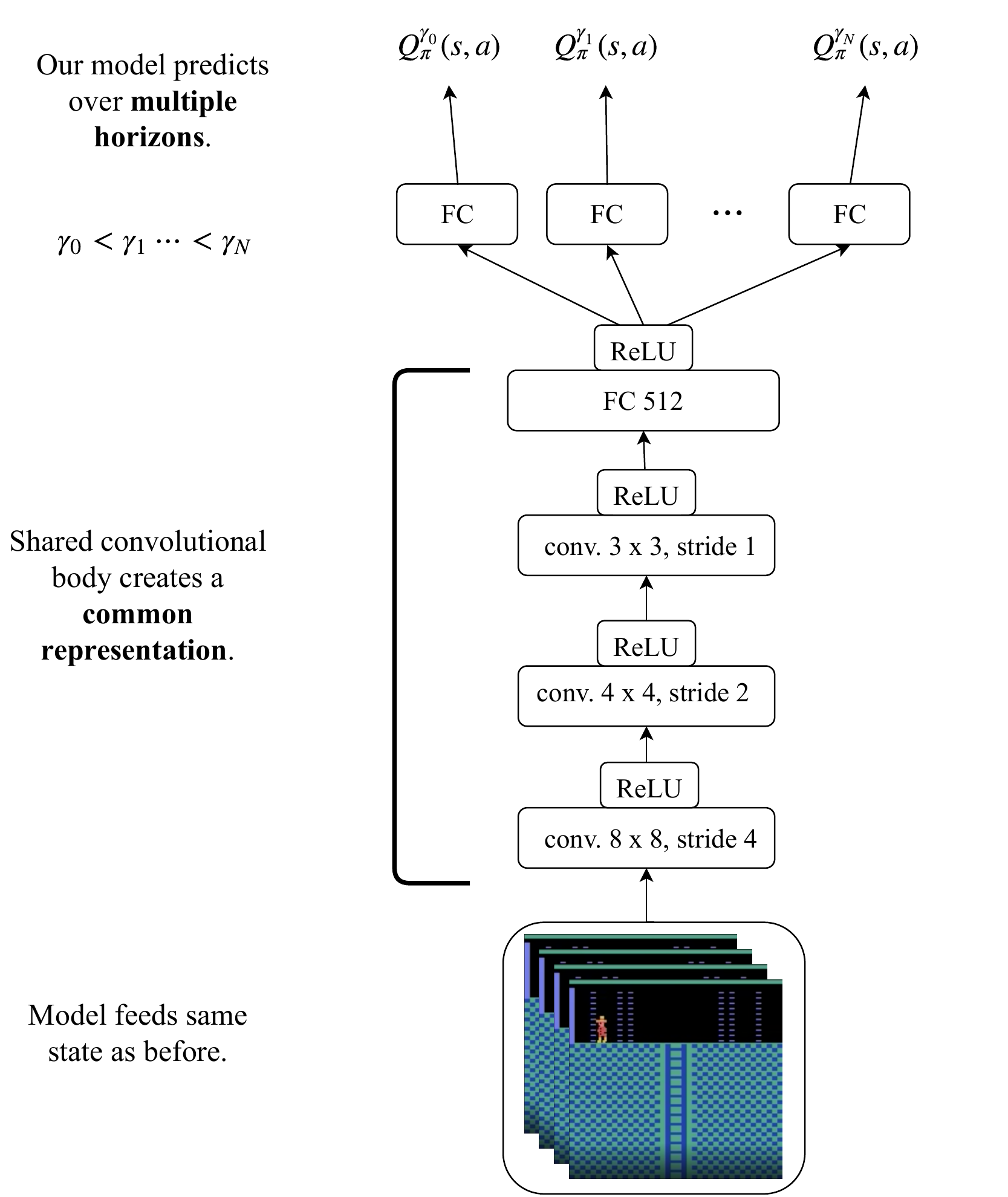}
    \caption{\emph{Multi-horizon} model predicts $Q$-values for $n_{\gamma}$ separate discount functions thereby modeling different effective horizons.  Each $Q$-value is a lightweight computation, an affine transformation off a shared representation.  By modeling over multiple time-horizons, we now have the option to construct \emph{policies} that act according to a particular value or a weighted combination.}
    \label{fig: model_architecture}
\end{figure}

We provide details on the hyperparameters in Appendix \ref{appendix:  hyperparameters}.
We consider the performance of the hyperbolic agent built on Rainbow (referred to as Hyper-Rainbow) on a random subset of Atari 2600 games in Figure \ref{fig: hyperrainbow_vs_multirainbow}.

\begin{figure}[!h]
    \centering
    \includegraphics[width=0.7\columnwidth]{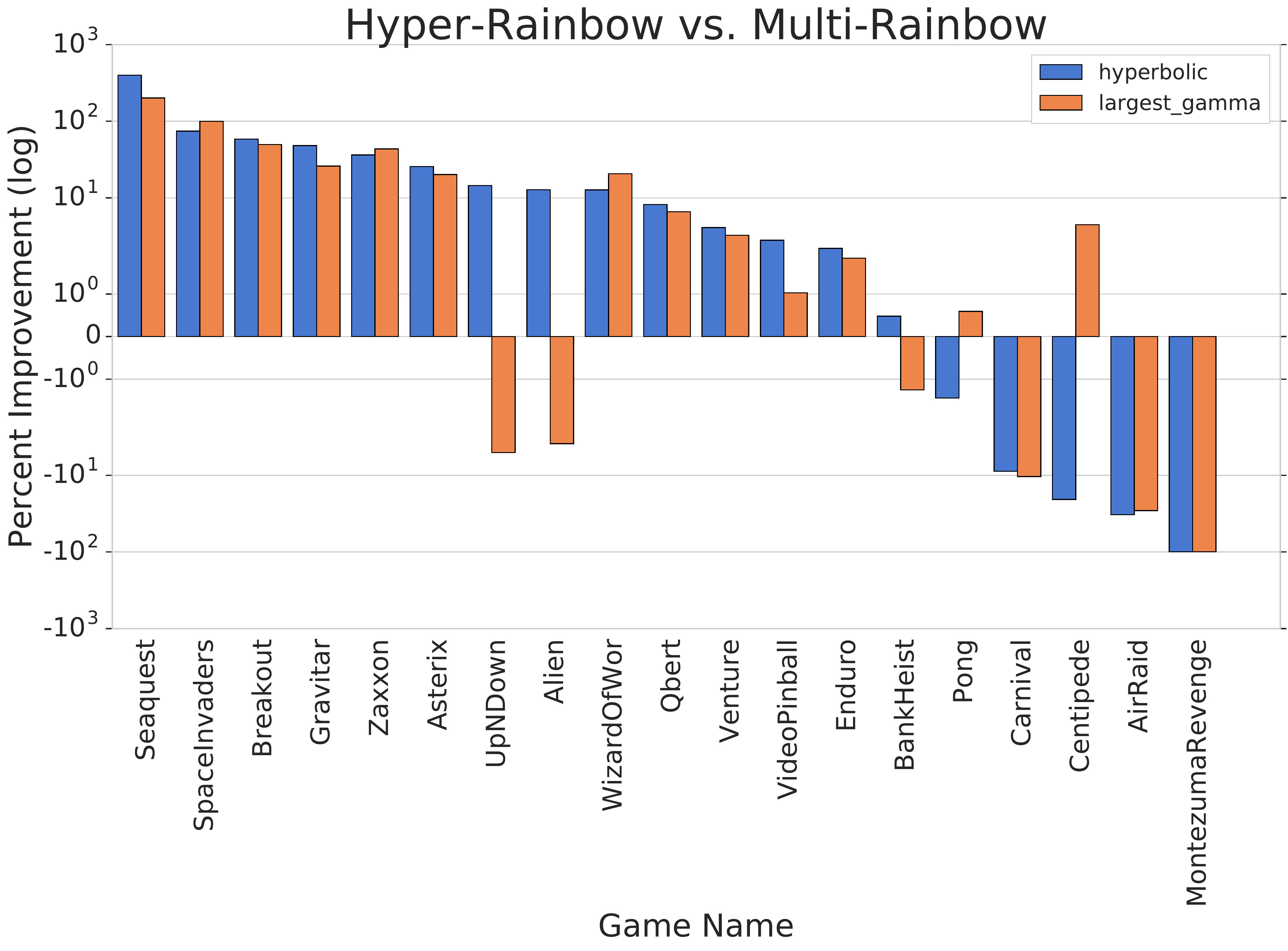}
    \caption{We compare the Hyper-Rainbow (in blue) agent versus the Multi-Rainbow (orange) agent on a random subset of 19 games from ALE  (3 seeds each).  For each game, the percentage performance improvement for each algorithm against Rainbow is recorded.  There is no significant difference whether the agent acts according to hyperbolically-discounted (Hyper-Rainbow) or exponentially-discounted (Multi-Rainbow) Q-values suggesting the performance improvement in ALE emerges from the multi-horizon auxiliary task.}
    \label{fig: hyperrainbow_vs_multirainbow}
\end{figure}

We find that the Hyper-Rainbow agent (blue) performs very well, often improving over the strong-baseline Rainbow agent.
On this subset of 19 games, we find that it improves upon 14 games and in some cases, by large margins.
However, in Section \ref{sec: multi_horizon} we seek a more complete understanding of the \emph{underlying driver} of this improvement in ALE through an ablation study.

\section{Multi-Horizon Auxiliary Task Results}\label{sec: multi_horizon}
To dissect the ALE improvements, recognize that Hyper-Rainbow changes two properties from the base Rainbow agent:

\begin{enumerate}
    \item \textbf{Behavior policy.} The agent acts according to hyperbolic Q-values computed by our approximation described in Section \ref{sec: approx_hyperbolic_qvals}
    \item \textbf{Learn over multiple horizons.} The agent simultaneously learns Q-values over many $\gamma$ rather than a Q-value for a single $\gamma$
\end{enumerate}

The second modification can be regarded as introducing an \emph{auxiliary task} \citep{jaderberg2016reinforcement}.
Therefore, to attribute the performance of each properly we  construct a Rainbow agent augmented with the multi-horizon auxiliary task (referred to as Multi-Rainbow and shown in orange) but have it still act according to the original policy.
 That is, Multi-Rainbow acts to maximize expected rewards discounted by a fixed $\gamma_{action}$ but now learns over multiple horizons as shown in Figure \ref{fig: model_architecture}.

We find that the Multi-Rainbow agent performs nearly as well on these games, suggesting the effectiveness of this as a stand-alone auxiliary task.  
This is not entirely unexpected given the rather special-case of hazard exhibited in ALE through sticky-actions \citep{machado2018revisiting}.

We examine further and investigate the performance of this auxiliary task across the full Arcade Learning Environment \citep{bellemare2017distributional} using the recommended evaluation by \citep{machado2018revisiting}.
Doing so we find empirical benefits of the multi-horizon auxiliary task on the Rainbow agent as shown in Figure \ref{fig: aux_rainbow}.

\begin{figure}[!h]
    \centering
    \includegraphics[width=0.5\columnwidth]{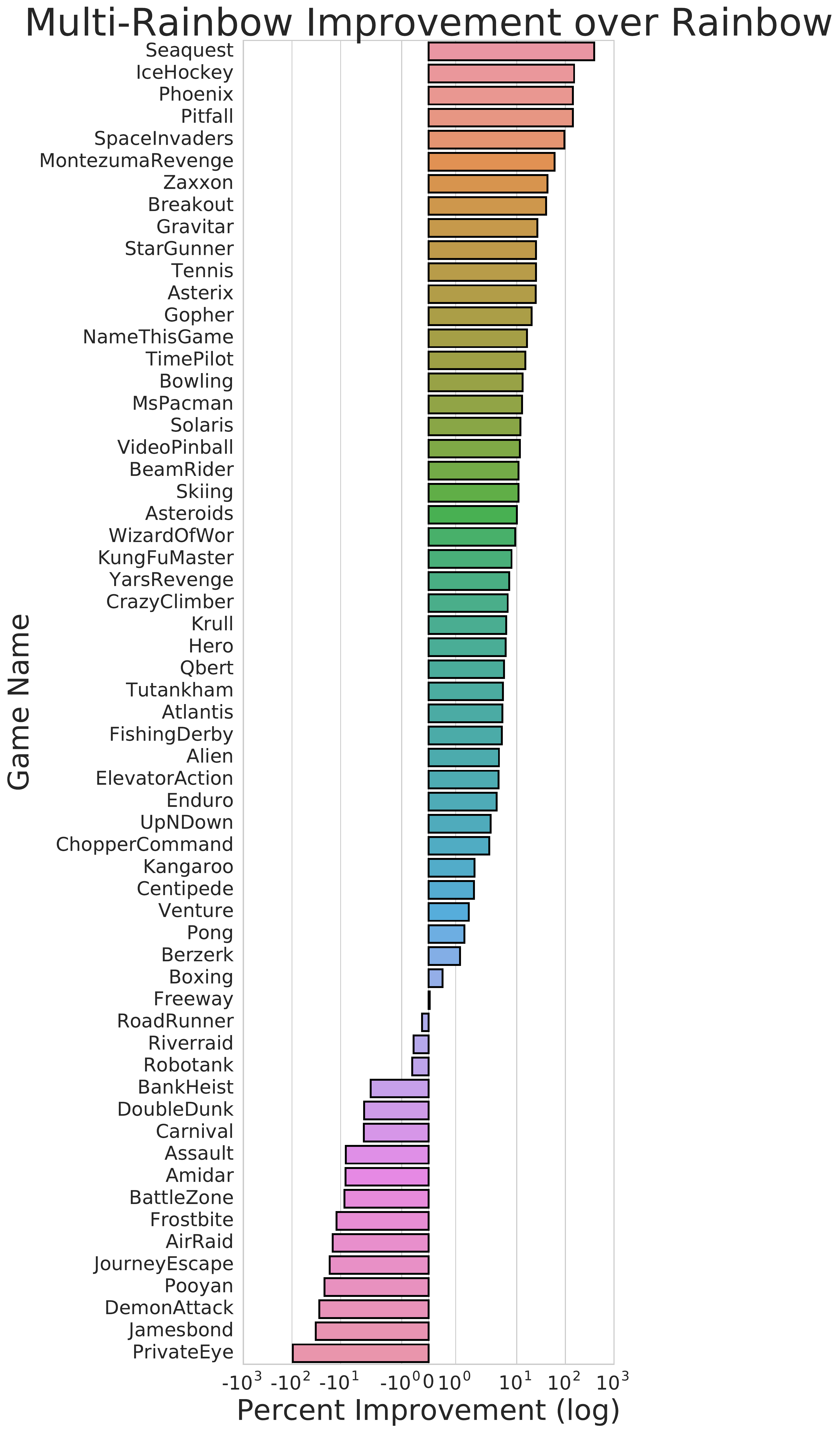}
    \caption{Performance improvement over Rainbow using the multi-horizon auxiliary task in Atari Learning Environment (3 seeds each).}
    \label{fig: aux_rainbow}
\end{figure}




\subsection{Analysis and Ablation Studies}
To understand the interplay of the multi-horizon auxiliary task with other improvements in deep RL, we test a random subset of 10 Atari 2600 games against improvements in Rainbow \citep{hessel2018rainbow}.  
On this set of games we measure a consistent improvement with multi-horizon C51 (Multi-C51) in 9 out of the 10 games over the base C51 agent \citep{bellemare2017distributional} in Figure \ref{fig: ablation_study}.  




\begin{figure}[h]
    \centering
    \subfigure[Multi-C51]{\includegraphics[width=0.35\columnwidth]{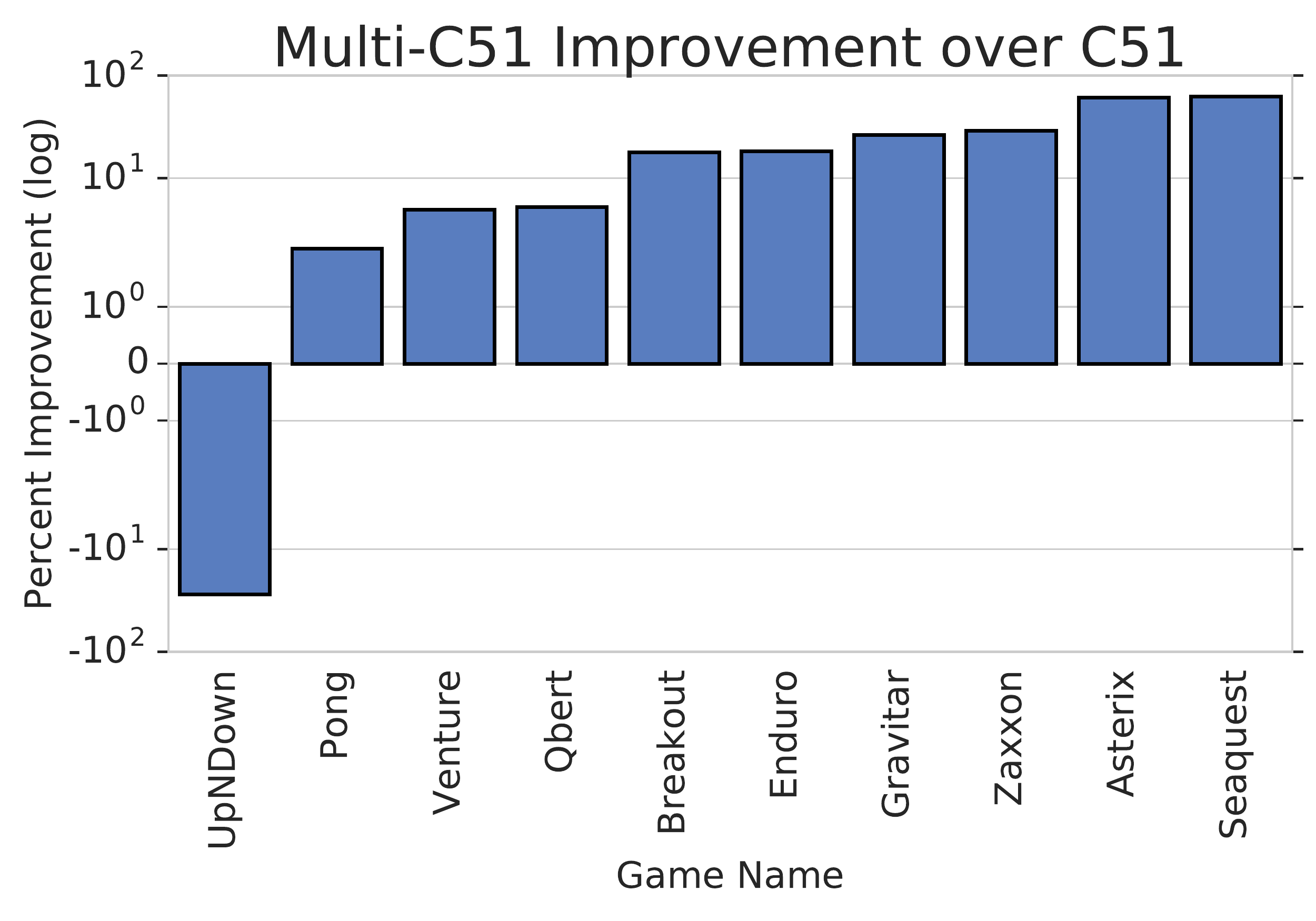}}
    \subfigure[Multi-C51 + n-step]{\includegraphics[width=0.35\columnwidth]{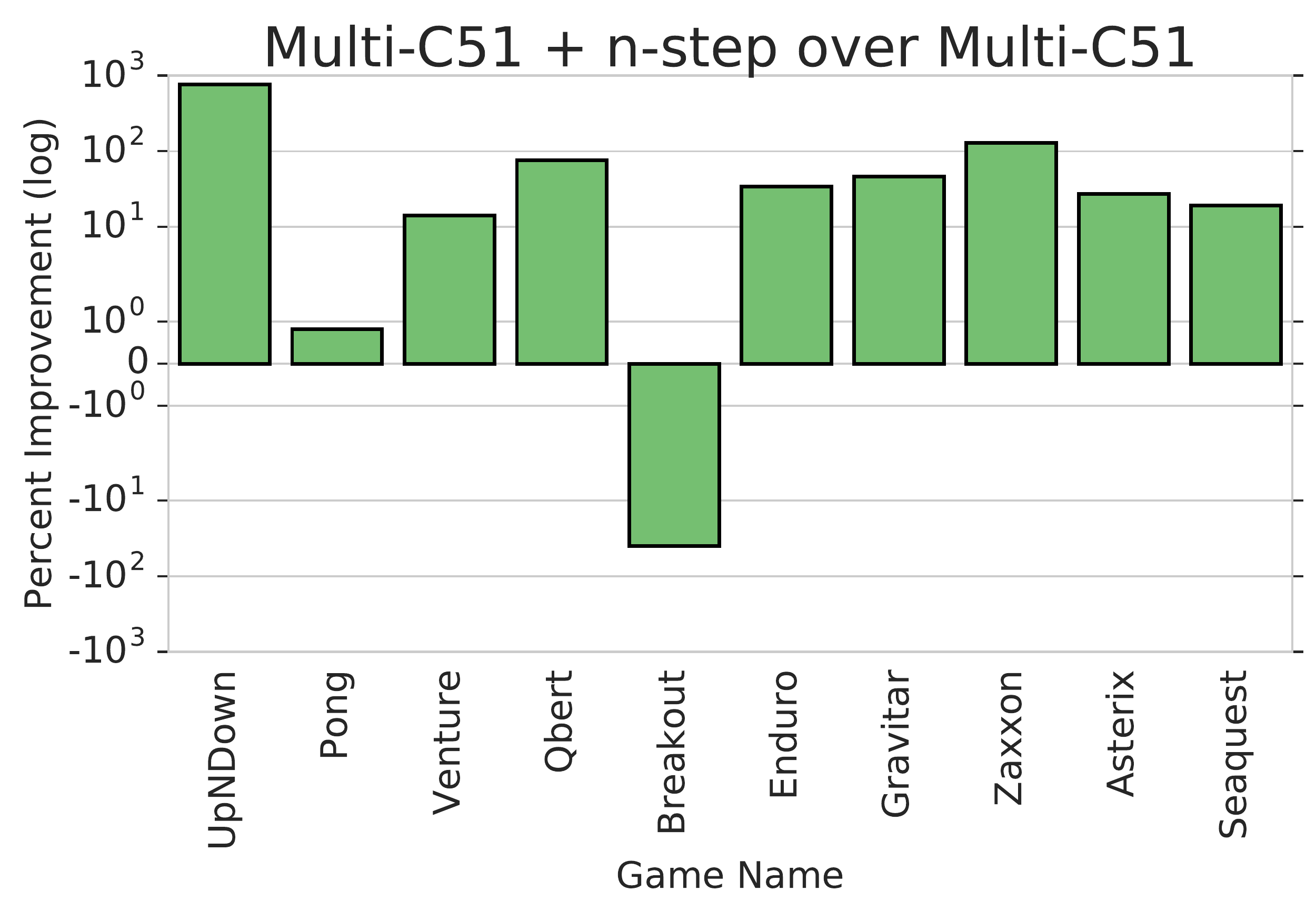}}
    \vskip\baselineskip
    \subfigure[Multi-C51 + priority]{\includegraphics[width=0.35\columnwidth]{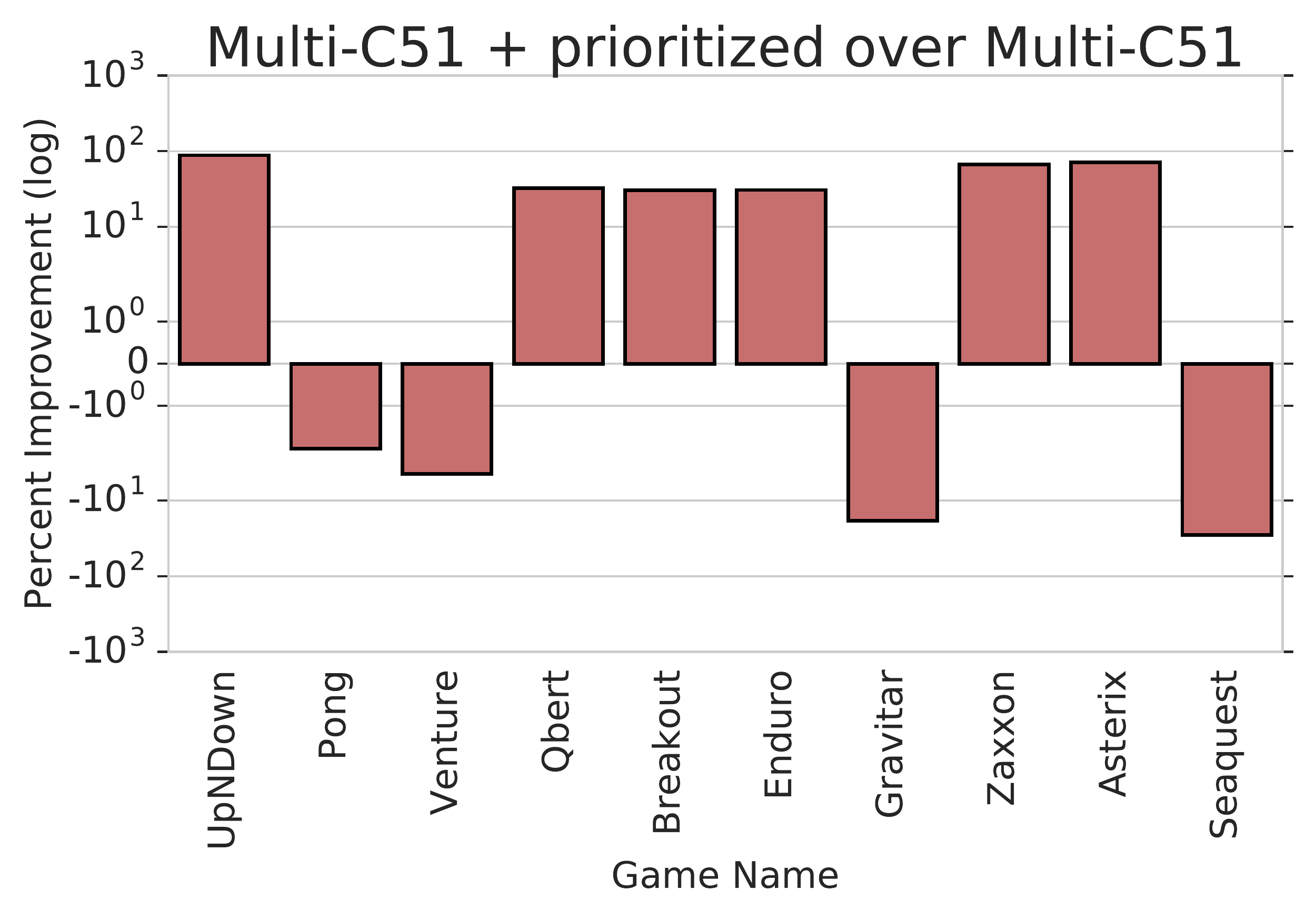}}
    \subfigure[Multi-Rainbow (=Multi-C51 + n-step + priority)]{\includegraphics[width=0.35\columnwidth]{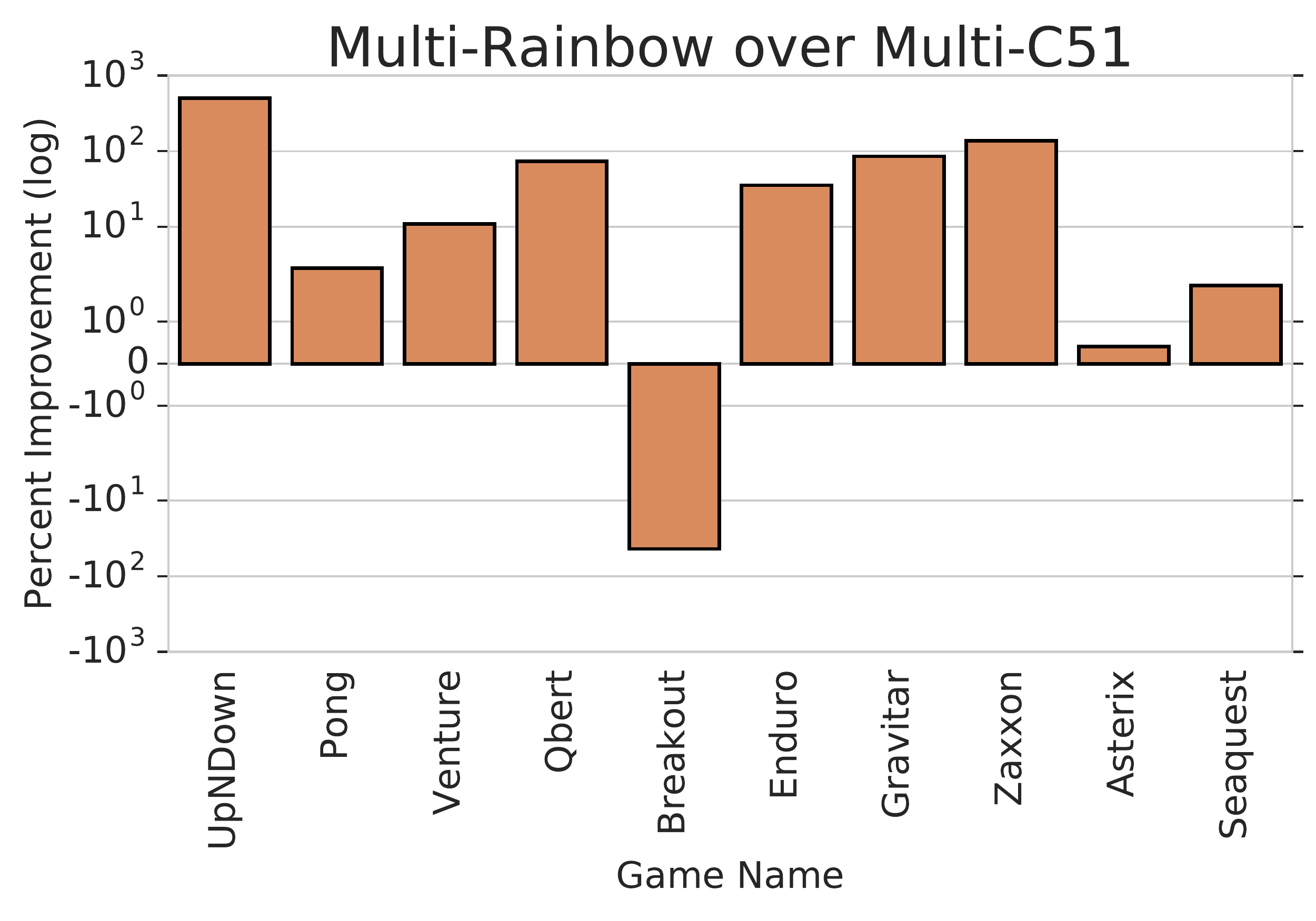}}
    \caption{Measuring the Rainbow improvements on top of the Multi-C51 baseline on a subset of 10 games in the Arcade Learning Environment (3 seeds each). On this subset, we find that the multi-horizon auxiliary task interfaces well with n-step methods (top right) but poorly with a prioritized replay buffer (bottom left).}
    \label{fig: ablation_study}
\end{figure}

Figure \ref{fig: ablation_study} indicates that the current implementation of Multi-Rainbow does not generally build successfully on the prioritized replay buffer.  On the subset of ten games considered, we find that four out of ten games (Pong, Venture, Gravitar and Zaxxon) are negatively impacted despite \citep{hessel2018rainbow} finding it to be of considerable benefit and specifically beneficial in three out of these four games (Venture was not considered).  The current prioritization scheme simply averaged the temporal-difference errors over all $Q$-values to establish priority.  
Alternative prioritization schemes are offering encouraging preliminary results (Appendix \ref{appendix: alternative_priority}).

\section{Discussion}
This work builds on a body of work that questions one of the basic premises of RL:  one should maximize the \emph{exponentially discounted} returns via a \emph{single} discount factor.
By learning over multiple horizons simultaneously, we have broadened the scope of our learning algorithms.
Through this we have shown that we can enable acting according to new discounting schemes and that learning multiple horizons is a powerful stand-alone auxiliary task.  
Our method well-approximates hyperbolic discounting and performs better in hazardous MDP distributions.
This may be viewed as part of an algorithmic toolkit to model alternative discount functions.

\section{Future Work}
There is growing interest in the time-preferences of RL agents.  
Through this work we have considered models of a constant, albeit uncertain, hazard rate $\lambda$.
This moves beyond the canonical RL approach of fixing a single $\gamma$ which implicitly holds no uncertainty on the value of $\lambda$ but this still does not fully capture all aspects of risk since the hazard rate may be a function of time.
Further, hazard may not be an intrinsic property of the environment but a joint property of both the \emph{policy} and the environment.
If an agent purses a policy leading to dangerous state distributions then it will naturally be subject to higher hazards and vice-versa.
We would therefore expect an interplay between time-preferences and policy.
This is not simple to deal with but recent work proposing state-action dependent discounting \citep{pitis2019rethinking} may provide a formalism for more general time-preference schemes.

\section*{Acknowledgements}
This research and its general framing drew upon the talents of many researchers at Google Brain, DeepMind and Mila.  
In particular, we'd like thank Ryan Sepassi for framing of the paper, Utku Evci for last minute Matplotlib help, Audrey Durand, Margaret Li, Adrien Ali Taïga, Ofir Nachum, Doina Precup, Jacob Buckman,  Marcin Moczulski, Nicolas Le Roux, Ben Eysenbach, Sherjil Ozair,  Anirudh Goyal, Ryan Lowe, Robert Dadashi, Chelsea Finn, Sergey Levine, Graham Taylor and Irwan Bello for general discussions and revisions.

\clearpage
\bibliography{references}
\bibliographystyle{iclr2019_conference}

\clearpage
\appendix
\section{Equivalence of Hyperbolic Discounting and Exponential Hazard}\label{appendix:  hyperbolic_equiv_hazard}
Following Section \ref{sec: hazard_implies_discount} we also show a similar equivalence between hyperbolic discounting and the specific hazard distribution
$p_k(\lambda) = \frac{1}{k} \text{exp} (-\lambda / k)$, where again, $\lambda \in [0, \infty)$
\begin{align*}
Q_\pi^{\delta(0), \Gamma_k} (s,a) &= \mathbb{E}_{ \pi, P_0} \left[ \sum_{t=0}^\infty \Gamma_k(t)  R(s_t, a_t) | s_0 = s, a_0 = a \right] \\
&= \mathbb{E}_{ \pi, P_0} \left[ \sum_{t=0}^\infty \left ( \int_{\lambda=0}^\infty  p_k(\lambda) e^{-\lambda t} d \lambda \right )  R(s_t, a_t) | s_0 = s, a_0 = a \right] \\
&= \int_{\lambda=0}^\infty  p_k(\lambda) \mathbb{E}_{ \pi, P_0} \left[ \sum_{t=0}^\infty  e^{-\lambda t}R(s_t, a_t) | s_0 = s, a_0 = a \right]  d \lambda \\
&= \mathbb{E}_{\lambda \sim p_k(\cdot)} \mathbb{E}_{ \pi, P_0} \left[ \sum_{t=0}^\infty  e^{-\lambda t}R(s_t, a_t) | s_0 = s, a_0 = a \right]  \\
&= \mathbb{E}_{\lambda \sim p_k(\cdot)} \mathbb{E}_{ \pi, P_\lambda} \left[ \sum_{t=0}^\infty R(s_t, a_t) | s_0 = s, a_0 = a \right]  \\
&= Q_\pi^{p_k, 1} (s,a) \\ 
\end{align*}
Where the first step uses Equation \ref{eqn: hyp_discount}. This equivalence implies that discount factors can be used to learn policies that are robust to hazards. 

\section{Alternative Discount Functions}\label{appendix: alternative_discount}
We expand upon three special cases to see how functions $f(\gamma, t) = w(\gamma)\gamma^t$ may be related to different discount functions $d(t)$.

\textbf{Three cases:}
\vspace{-0.2cm}
\begin{enumerate}
    \item \textbf{Delta hazard prior}:  $p(\lambda) = \delta(\lambda - k)$
    \item \textbf{Exponential hazard prior}: $p(\lambda) = \frac{1}{k}e^{-\lambda / k}$
    \item \textbf{Uniform hazard prior}: $p(\lambda) = \frac{1}{k}$ for $\lambda \in [0, k]$
\end{enumerate}

For the three cases we begin with the Laplace transform on the prior $p(\lambda) = \int_{\lambda=0}^\infty  p(\lambda) e^{-\lambda t} d\lambda$ and then chnage the variables according to the relation between $\gamma = e^{-\lambda}$, Equation \ref{eqn: gamma_to_lambda}.

\subsection{Delta Hazard Prior}
A delta prior $p(\lambda) = \delta(\lambda - k)$ on the hazard rate is consistent with exponential discounting.
\begin{align*}
    \int_{\lambda=0}^\infty  p(\lambda) e^{-\lambda t} d\lambda &= \int_{\lambda=0}^\infty  \delta(\lambda - k) e^{-\lambda t} d\lambda\\
         &= e^{-kt}
\end{align*}
where $\delta(\lambda - k)$ is a Dirac delta function defined over variable $\lambda$ with value $k$. The change of variable $\gamma=e^{-\lambda}$ (equivalently $\lambda=-\ln\gamma$) yields differentials $d\lambda = -\frac{1}{\gamma} d\gamma$ and the limits $\lambda=0 \rightarrow  \gamma=1$ and $\lambda = \infty \rightarrow \gamma = 0$. 
Additionally, the hazard rate value $\lambda=k$ is equivalent to the $\gamma=e^{-k}$.
\begin{align*}
    d(t) &= \int_{\lambda=0}^\infty  p(\lambda) e^{-\lambda t} d\lambda \\
    &= \int_{\gamma=1}^0  \delta(-\ln \gamma - k) \gamma ^t \left(-\frac{1}{\gamma} d\gamma \right)\\
    &= \int_{\gamma=0}^1  \delta(-\ln \gamma - k) \gamma ^{t-1} d\gamma \\
    &= e^{-kt} \\
    &= \gamma_k^t
\end{align*}

where we define a $\gamma_k = e^{-k}$ to make the connection to standard RL discounting explicit.
Additionally and reiterating, the use of a single discount factor, in this case $\gamma_k$, is equivalent to the prior that a \emph{single} hazard exists in the environment.

\subsection{Exponential Hazard Prior}
Again, the change of variable $\gamma=e^{-\lambda}$ yields differentials $d\lambda = -\frac{1}{\gamma} d\gamma$ and the limits $\lambda=0 \rightarrow  \gamma=1$ and $\lambda = \infty \rightarrow \gamma = 0$.
\begin{align*}
    \int_{\lambda=0}^\infty  p(\lambda) e^{-\lambda t} d\lambda &= \int_{\gamma=1}^0 p(-\text{ln}\gamma)\gamma ^t \left(-\frac{1}{\gamma} d\gamma \right)\\
    &= \int_{\gamma=0}^1  p(-\text{ln}\gamma)\gamma ^{t-1} d\gamma
\end{align*}
where $p(\cdot)$ is the prior.  With the exponential prior $p(\lambda) = \frac{1}{k} \text{exp}(-\lambda / k)$ and by substituting $\lambda=-\text{ln}\gamma$ we verify Equation \ref{eqn: hyperbolic}

\begin{align*}\label{eqn: hyperbolic_through_gamma}
    \int_0^1 \frac{1}{k} \text{exp}(\ln \gamma / k)  \gamma^{t-1} d\gamma 
        &=  \frac{1}{k} \int_0^1  \text{exp}(\text{ln} \gamma^{1/k}) \gamma^{t-1} d\gamma\\
        &=  \frac{1}{k} \int_0^1  \gamma ^{1/k + t-1} d\gamma\\
        &= \frac{1}{k} \frac{1}{\frac{1}{k} + t} \gamma^{1/k + t} \biggr\rvert_{\gamma=0}^1 \\
        &= \frac{1}{1 + kt}  
\end{align*}

\subsection{Uniform Hazard Prior}
Finally if we hold a uniform prior over hazard, $\frac{1}{k}$ for  $\lambda \in [0, k]$ then \cite{sozou1998hyperbolic} shows the Laplace transform yields 
\begin{align*}
    d(t) =&  \int_0^\infty p(\lambda) e^{-\lambda t} d\lambda \\
         =& \frac{1}{k} \int_0^k  e^{-\lambda t} d\lambda\\
         =& -\frac{1}{kt} e^{-\lambda t} \biggr\vert_{\lambda=0}^k \\
         =& \frac{1}{kt} \left(1 - e^{-kt} \right)
\end{align*}
Use the same change of variables to relate this to $\gamma$.  The bounds of the integral become $\lambda=0 \rightarrow  \gamma=1$ and $\lambda = k \rightarrow \gamma = e^{-k}$.
\begin{align*}
    d(t) =& -\frac{1}{k} \int_{\gamma=1}^{e^{-k}}  \gamma^{t-1} d\gamma\\
         =& \frac{1}{kt} \gamma^t \biggr\vert_{\gamma=e^{-k}}^1 \\
         =& \frac{1}{kt} \left(1 - e^{-kt} \right)
\end{align*}
which recovers the discounting scheme.  

\section{Determining the $\gamma$ Interval}\label{appendix: gamma_interval}
We provide further detail for which $\gamma$ we choose to model and motivation why.
We choose a $\gamma_\text{max}$ which is the largest $\gamma$ to learn through Bellman updates.  
If we are using $k$ as the hyperbolic coefficient in Equation \ref{eqn: hyp_discount} and we are approximating the integral with $n_\gamma$ our $\gamma_\text{max}$ would be
\begin{equation}
\gamma_{\text{max}}  =  \left(1 - b ^ {n_\gamma} \right)^k 
\label{eqn: gamma_max}
\end{equation}
However, allowing $\gamma_\text{max} \rightarrow 1$ get arbitrarily close to 1 may result in learning instabilities \cite{bertsekas1995neuro}.
Therefore we compute an exponentiation base of $b = \text{exp}(\text{ln}(1 - \gamma_{\text{max}} ^ {1/ k }) / n_{\gamma})$ which bounds our $\gamma_\text{max}$ at a known stable value. 
This induces an approximation error which is described more in Appendix \ref{appendix:  approximation_error}.

\section{Estimating Hyperbolic Coefficients}\label{appendix:  hyperbolic_approach}
As discussed, we can estimate the hyperbolic discount in two different ways.
We illustrate the resulting estimates here and resulting approximations.
We use lower-bound Riemann sums in both cases for simplicity but more sophisticated integral estimates exist.

\begin{figure}[h]
    \subfigure[Our approach.]{\includegraphics[width=0.45\columnwidth]{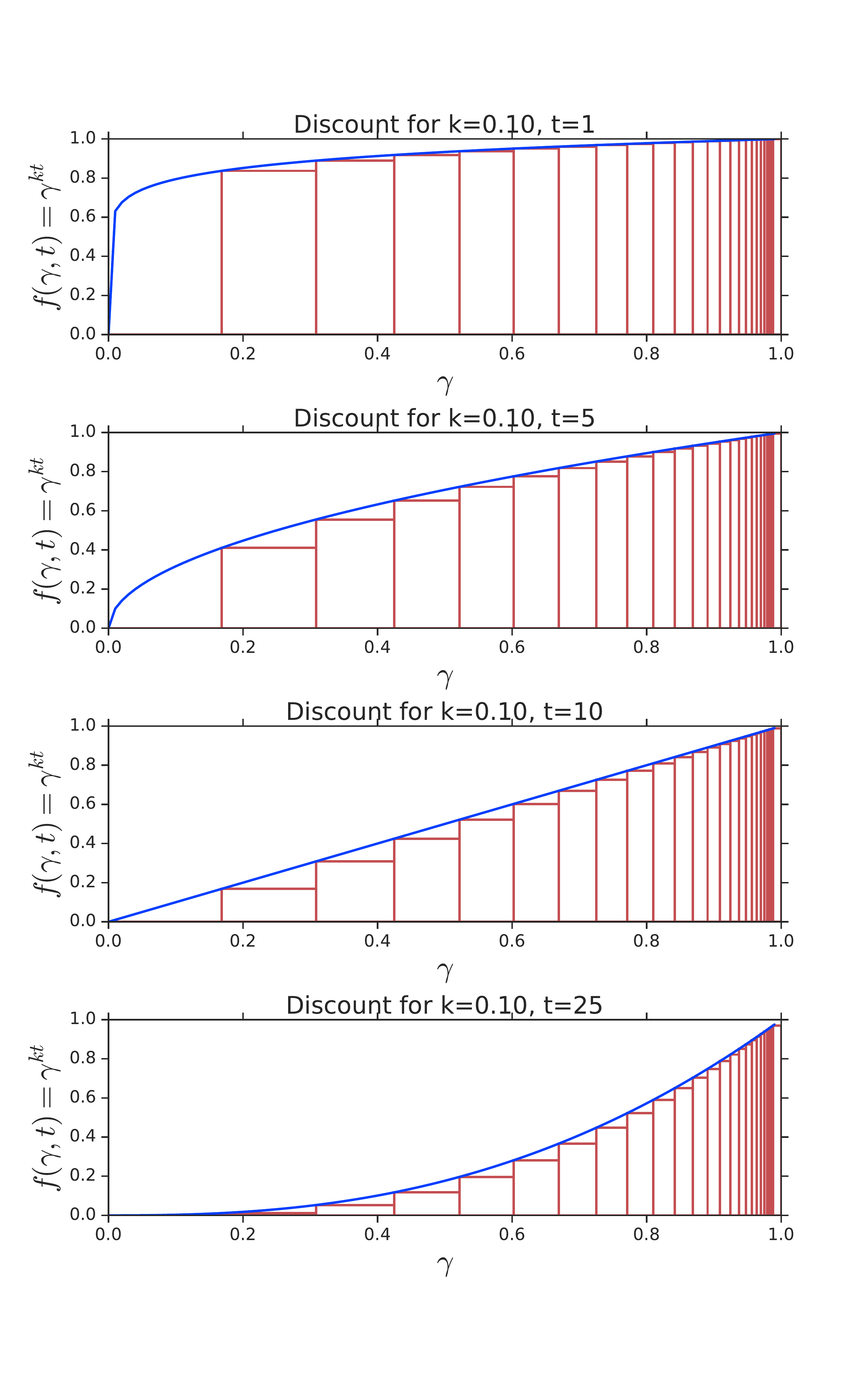}}
    \subfigure[Alternative approach.]{\includegraphics[width=0.45\columnwidth]{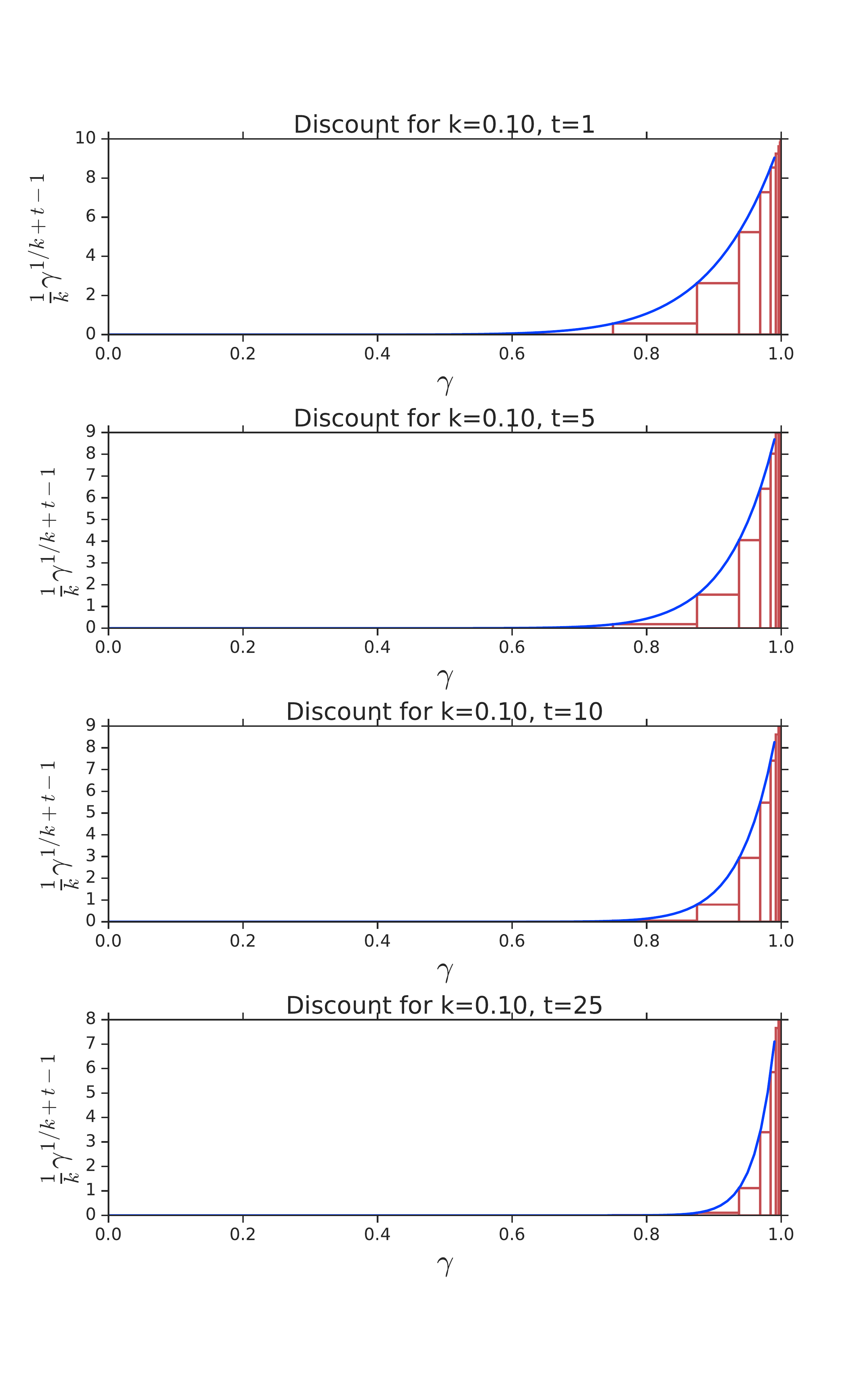}}
    
    \caption{Comparison of hyperbolic coefficient integral estimation between the two approaches.  \\
    (a) We approximate the integral of the function $\gamma^{kt}$ via a lower estimate of rectangles at specific $\gamma$-values.  The sum of these rectangles approximates the hyperbolic discounting scheme $1/(1 + kt)$ for time $t$.  \\
    (b) Alternative form for approximating hyperbolic coefficients which is sharply peaked as $\gamma \rightarrow 1$ which led to larger errors in estimation under our initial techniques.}
    \label{fig: compare_integration_schemes}
\end{figure}

As noted earlier, we considered two different integrals for computed the hyperbolic coefficients.
Under the form derived by the Laplace transform, the integrals are sharply peaked as $\gamma \rightarrow 1$.
The difference in integrals is visually apparent comparing in Figure \ref{fig: compare_integration_schemes}.

\section{Performance of Different Replay Buffer Prioritization Scheme}\label{appendix: alternative_priority}
As found through our ablation study in Figure \ref{fig: ablation_study}, the Multi-Rainbow auxiliary task interacted poorly with the prioritized replay buffer when the TD-errors were averaged evenly across all heads.  As an alternative scheme, we considered prioritizing according to the largest $\gamma$, which is also the $\gamma$ defining the $Q$-values by which the agent acts.

The (preliminary\footnote{These runs have been computed over approximately 100 out of 200 iterations and will be updated for the final version.}) results of this new prioritization scheme is in Figure \ref{fig: aux_rainbow_largest}.

\begin{figure}[!ht]
    \centering
    \includegraphics[width=0.7\columnwidth]{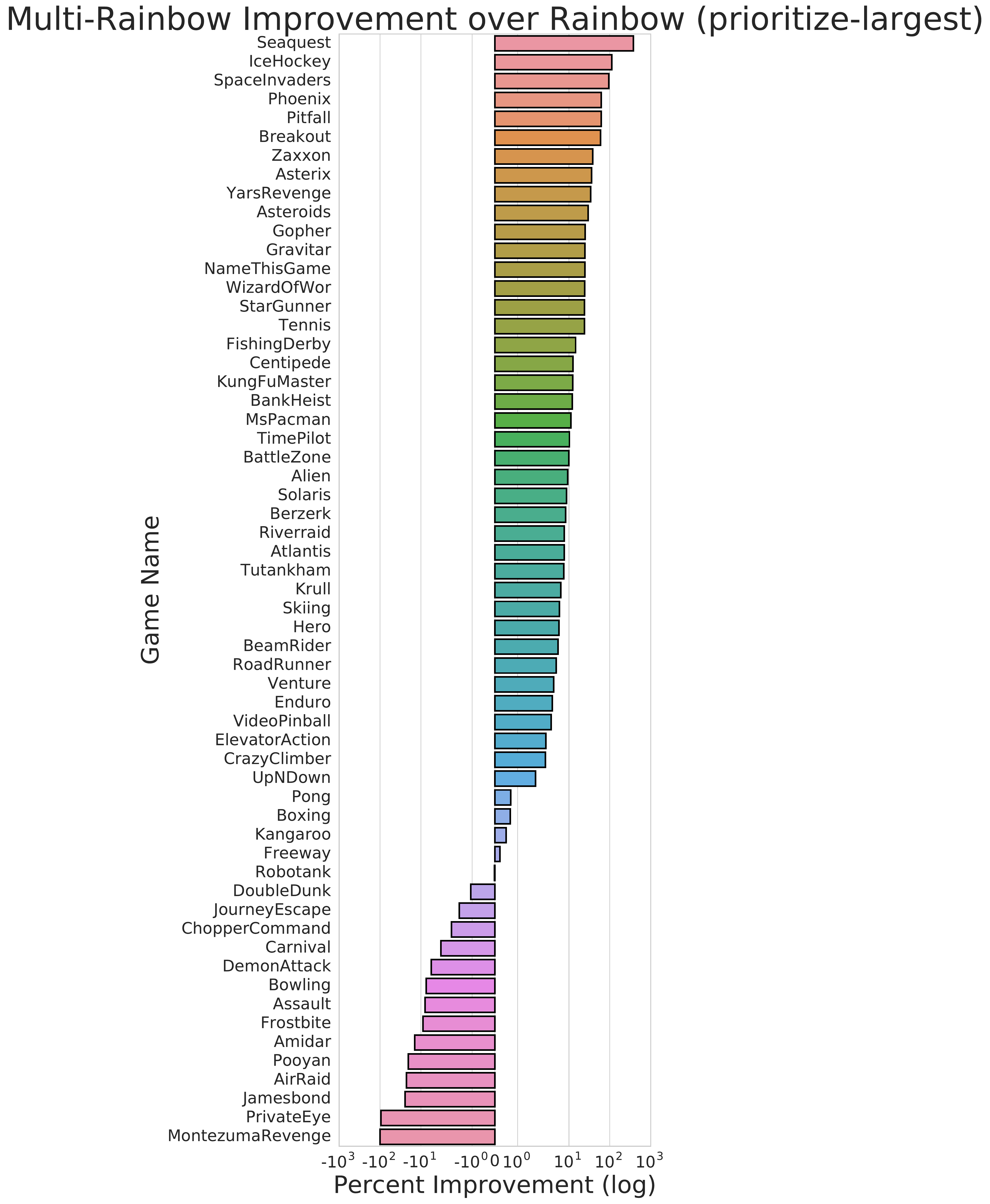}
    \caption{The (preliminary) performance improvement over Rainbow using the multi-horizon auxiliary task in Atari Learning Environment when we instead prioritize according to the TD-errors computed from the largest $\gamma$ (3 seeds each).}
    \label{fig: aux_rainbow_largest}
\end{figure}

To this point, there is evidence that prioritizing according to the TD-errors generated by the largest gamma is a better strategy than averaging.

\section{Approximation Errors}\label{appendix:  approximation_error}
Instead of evaluating the upper bound of Equation \ref{eqn: hyperbolic} at 1 we evaluate at $\gamma_{\text{max}}$ which yields $\gamma_{\text{max}}^{kt} / (1 + kt)$.  Our approximation induces an error in the approximation of the hyperbolic discount.  

\begin{figure}[h]
    \begin{floatrow}
        \ffigbox[0.6\FBwidth]{%
            \includegraphics[width=\columnwidth]{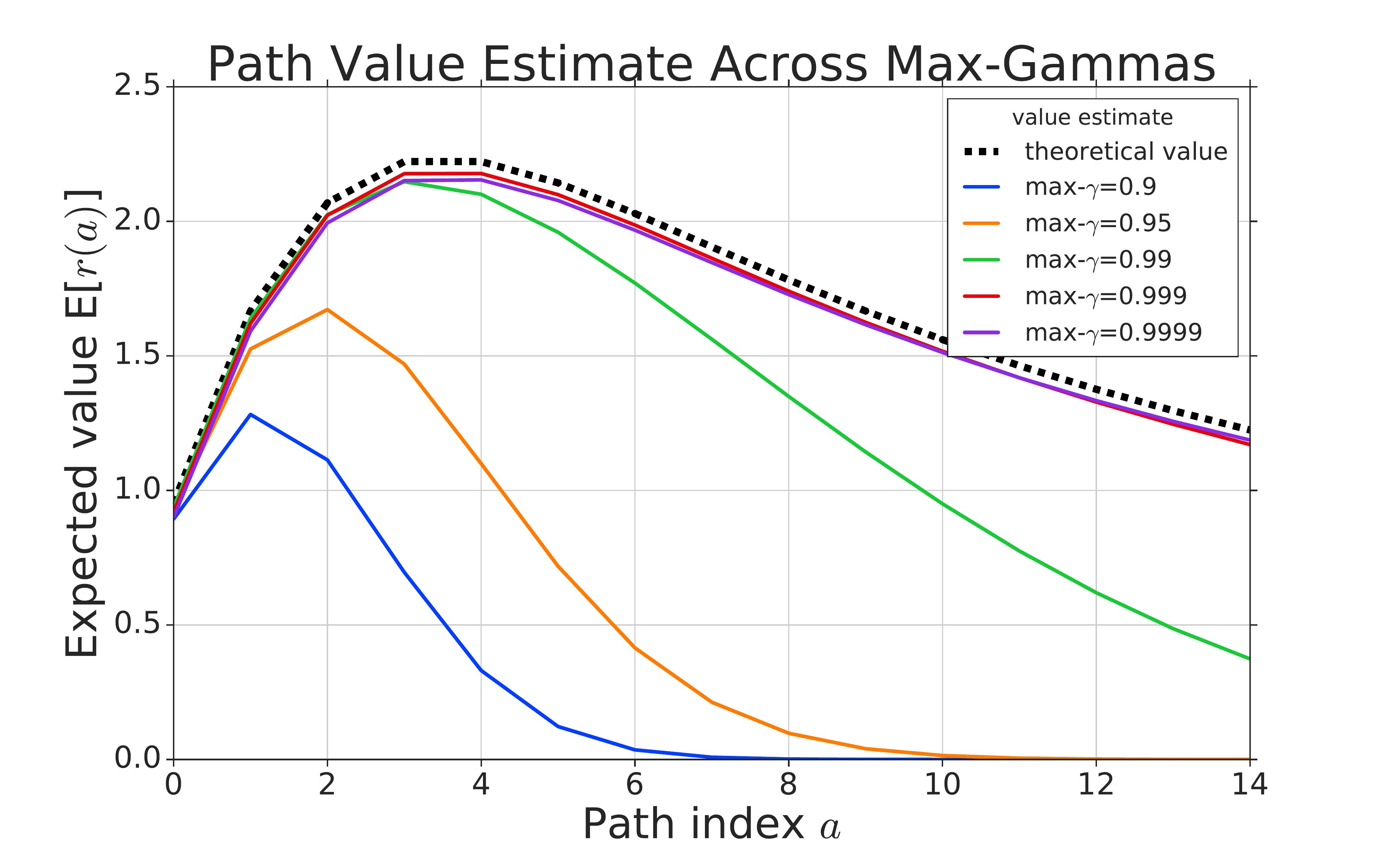}
            \caption{By instead evaluating our integral up to $\gamma_{\text{max}}$ rather than to 1, we induce an approximation error which increases with $t$. Numerical results in Table \ref{tbl: max_gamma_MSE}.}
            \label{fig: max_gamma}
        }
        
        \capbtabbox{%
            \def\arraystretch{1.35}
                \begin{tabular}{ c c } 
                    \textbf{Discount function} & \textbf{MSE} \\ 
                    \hline 
                    max-$\gamma$=0.999& \textbf{0.002} \\
                    max-$\gamma$=0.9999& 0.003 \\
                    max-$\gamma$=0.99& 0.233 \\
                    max-$\gamma$=0.95& 1.638 \\
                    max-$\gamma$=0.9& 2.281 \\
                    & \\
                    & \\
                \end{tabular}
            }{%
                \caption{The average mean squared error (MSE) over each of the paths in Figure \ref{fig: max_gamma}.}
                \label{tbl: max_gamma_MSE}
            }
    \end{floatrow}
\end{figure}

This approximation error in the Riemann sum increases as the $\gamma_{\text{max}}$ decreases as evidenced by Figure \ref{fig: max_gamma}.
When the maximum value of $\gamma_\text{max} \rightarrow 1$ then the approximation becomes more accurate as supported in Table \ref{tbl: max_gamma_MSE} up to small random errors.

\clearpage
\onecolumn

\section{Hyperparameters}\label{appendix:  hyperparameters}
For all our experiments in DQN \cite{mnih2015human}, C51 \cite{bellemare2017distributional} and Rainbow \cite{hessel2018rainbow}, we benchmark against the baselines set by \cite{DBLP:journals/corr/abs-1812-06110} and we use the default hyperparameters for each of the respective algorithms.  That is, our Multi-agent uses the same optimization, learning rates, and hyperparameters as it's base class.

\begin{table}[h!]
    \small
    \centering
    \begin{tabular}{c c}
    Hyperparameter & Value \\
    \hline \hline
    \texttt{Runner.sticky\_actions} & \texttt{Sticky actions prob 0.25} \\
    \texttt{Runner.num\_iterations} & \texttt{200} \\
    \texttt{Runner.training\_steps} & \texttt{250000}  \\
    \texttt{Runner.evaluation\_steps} & \texttt{125000} \\
    \texttt{Runner.max\_steps\_per\_episode} & \texttt{27000} \\
    & \\
    \texttt{WrappedPrioritizedReplayBuffer.replay\_capacity} & \texttt{1000000} \\
    \texttt{WrappedPrioritizedReplayBuffer.batch\_size} & \texttt{32} \\
    & \\
    \texttt{RainbowAgent.num\_atoms} & \texttt{51} \\
    \texttt{RainbowAgent.vmax} & \texttt{10.} \\
    \texttt{RainbowAgent.update\_horizon} & \texttt{3} \\
    \texttt{RainbowAgent.min\_replay\_history} & \texttt{20000}  \\
    \texttt{RainbowAgent.update\_period} & \texttt{4} \\
    \texttt{RainbowAgent.target\_update\_period} & \texttt{8000} \\
    \texttt{RainbowAgent.epsilon\_train} & \texttt{0.01} \\
    \texttt{RainbowAgent.epsilon\_eval} & \texttt{0.001} \\
    \texttt{RainbowAgent.epsilon\_decay\_period} & \texttt{250000} \\
    \texttt{RainbowAgent.replay\_schem}e & \texttt{'prioritized'} \\
    \texttt{RainbowAgent.tf\_device} & \texttt{'/gpu:0'}  \\
    \texttt{RainbowAgent.optimizer} & \texttt{@tf.train.AdamOptimizer()} \\
    & \\
    \texttt{tf.train.AdamOptimizer.learning\_rate} & \texttt{0.0000625} \\
    \texttt{tf.train.AdamOptimizer.epsilon} & \texttt{0.00015} \\
    & \\
    \texttt{HyperRainbowAgent.number\_of\_gamma} & \texttt{10} \\
    \texttt{HyperRainbowAgent.gamma\_max} & \texttt{0.99} \\
    \texttt{HyperRainbowAgent.hyp\_exponent} & \texttt{0.01} \\
    \texttt{HyperRainbowAgent.acting\_policy} & \texttt{'largest\_gamma'} \\
    \end{tabular}
    \caption{Configurations for the Multi-C51 and Multi-Rainbow used with Dopamine \cite{DBLP:journals/corr/abs-1812-06110}.}
    \label{tbl: gin_configs}
\end{table}

\section{Auxiliary Task Results}
Final results of the multi-horizon auxiliary task on Rainbow (Multi-Rainbow) in Table \ref{tbl: final_results}.

\begin{table}[h!]
    \centering
    \small
    \begin{tabular}{c c c c c c}
Game Name & DQN & C51 & Rainbow & Multi-Rainbow \\
\hline \hline
\texttt{AirRaid}&	8190.3&	9191.2&	\textbf{16941.2}&	12659.5\\
\texttt{Alien}&	2666.0&	2611.4&	3858.9&	\textbf{3917.2}\\
\texttt{Amidar}&	1306.0&	1488.2&	\textbf{2805.7}&	2477.0\\
\texttt{Assault}&	1661.6&	2079.0&	\textbf{3815.9}&	3415.1\\
\texttt{Asterix}&	3772.5&	15289.5&	19789.2&	\textbf{24385.6}\\
\texttt{Asteroids}&	844.7&	1241.5&	1524.1&	\textbf{1654.5}\\
\texttt{Atlantis}&	\textbf{935784.0}&	894862.0&	890592.0&	923276.7\\
\texttt{BankHeist}&	723.5&	863.4&	\textbf{1209.0}&	1132.0\\
\texttt{BattleZone}&	20508.5&	28323.2&	\textbf{42911.1}&	38827.1\\
\texttt{BeamRider}&	6326.4&	6070.6&	7026.7&	\textbf{7610.9}\\
\texttt{Berzerk}&	590.3&	538.3&	864.0&	\textbf{879.1}\\
\texttt{Bowling}&	40.3&	49.8&	\textbf{68.8}&	62.9\\
\texttt{Boxing}&	83.3&	83.5&	98.8&	\textbf{99.3}\\
\texttt{Breakout}&	146.6&	\textbf{254.1}&	123.9&	162.5\\
\texttt{Carnival}&	4967.9&	4917.1&	\textbf{5211.8}&	5072.2\\
\texttt{Centipede}&	3419.9&	\textbf{8068.9}&	6878.0&	6946.6\\
\texttt{ChopperCommand}&	3084.5&	6230.4&	13415.1&	\textbf{13942.9}\\
\texttt{CrazyClimber}&	113992.2&	146072.3&	151454.9&	\textbf{160161.0}\\
\texttt{DemonAttack}&	7229.2&	8485.1&	\textbf{19738.0}&	14780.9\\
\texttt{DoubleDunk}&	-4.5&	2.7&	\textbf{22.6}&	21.9\\
\texttt{ElevatorAction}&	2434.3&	73416.0&	81958.0&	\textbf{85633.3}\\
\texttt{Enduro}&	895.0&	1652.9&	2290.1&	\textbf{2337.5}\\
\texttt{FishingDerby}&	12.4&	16.6&	44.5&	\textbf{45.1}\\
\texttt{Freeway}&	26.3&	\textbf{33.8}&	33.8&	33.8\\
\texttt{Frostbite}&	1609.6&	4522.8&	\textbf{8988.5}&	7929.7\\
\texttt{Gopher}&	6685.8&	8301.1&	11749.6&	\textbf{13664.6}\\
\texttt{Gravitar}&	339.1&	709.8&	1293.0&	\textbf{1638.7}\\
\texttt{Hero}&	17548.5&	34117.8&	47545.4&	\textbf{50141.8}\\
\texttt{IceHockey}&	-5.0&	-3.3&	2.6&	\textbf{6.3}\\
\texttt{Jamesbond}&	618.3&	816.5&	\textbf{1263.8}&	773.4\\
\texttt{JourneyEscape}&	-2604.2&	-1759.1&	\textbf{-818.1}&	-1002.9\\
\texttt{Kangaroo}&	13118.1&	9419.7&	13794.0&	\textbf{13930.6}\\
\texttt{Krull}&	6558.0&	\textbf{7232.3}&	6292.5&	6645.7\\
\texttt{KungFuMaster}&	26161.2&	27089.5&	30169.6&	\textbf{31635.2}\\
\texttt{MontezumaRevenge}&	2.6&	\textbf{1087.5}&	501.3&	800.3\\
\texttt{MsPacman}&	3664.0&	3986.2&	4254.2&	\textbf{4707.3}\\
\texttt{NameThisGame}&	7808.1&	\textbf{12934.0}&	9658.9&	11045.9\\
\texttt{Phoenix}&	5893.4&	6577.3&	8979.0&	\textbf{23720.3}\\
\texttt{Pitfall}&	-11.8&	-5.3&	\textbf{0.0}&	0.0\\
\texttt{Pong}&	17.4&	19.7&	20.3&	\textbf{20.6}\\
\texttt{Pooyan}&	3800.8&	3771.2&	\textbf{6347.7}&	4670.0\\
\texttt{PrivateEye}&	2051.8&	19868.5&	\textbf{21591.4}&	888.9\\
\texttt{Qbert}&	11011.4&	11616.6&	19733.2&	\textbf{20817.4}\\
\texttt{Riverraid}&	12502.4&	13780.4&	\textbf{21624.2}&	21421.2\\
\texttt{RoadRunner}&	40903.3&	49039.8&	\textbf{56527.4}&	55613.0\\
\texttt{Robotank}&	62.5&	64.7&	\textbf{67.9}&	67.2\\
\texttt{Seaquest}&	2512.4&	38242.7&	11791.5&	\textbf{64985.0}\\
\texttt{Skiing}&	\textbf{-15314.9}&	-17996.7&	-17792.9&	-15603.3\\
\texttt{Solaris}&	2062.7&	2788.0&	3061.9&	\textbf{3139.9}\\
\texttt{SpaceInvaders}&	1976.0&	4781.9&	4927.9&	\textbf{8802.1}\\
\texttt{StarGunner}&	47174.3&	35812.4&	58630.5&	\textbf{72943.2}\\
\texttt{Tennis}&	-0.0&	\textbf{22.2}&	0.0&	0.0\\
\texttt{TimePilot}&	3862.5&	8562.7&	12486.1&	\textbf{14421.7}\\
\texttt{Tutankham}&	141.1&	253.1&	255.6&	\textbf{264.9}\\
\texttt{UpNDown}&	10977.6&	9844.8&	42572.5&	\textbf{50862.3}\\
\texttt{Venture}&	88.0&	1430.7&	1612.4&	\textbf{1639.9}\\
\texttt{VideoPinball}&	222710.4&	594468.5&	\textbf{651413.1}&	650701.1\\
\texttt{WizardOfWor}&	3150.8&	3633.8&	8992.3&	\textbf{9318.9}\\
\texttt{YarsRevenge}&	25372.0&	12534.2&	47183.8&	\textbf{49929.4}\\
\texttt{Zaxxon}&	5199.9&	7509.8&	15906.2&	\textbf{21921.3}\\
    \end{tabular}
    \caption{Multi-Rainbow agent returns versus the DQN, C51 and Rainbow agents of Dopamine \cite{DBLP:journals/corr/abs-1812-06110}.}
    \label{tbl: final_results}
\end{table}

\end{document}